% !!!IMPORTANT NOTE: Please read carefully all information including those preceded by % sign
%Before you compile the tex file please download the class file AIMS.cls from the following URL link to the
%local folder where your tex file resides. http://aimsciences.org/journals/tex-sample/AIMS.cls.
\documentclass{aims}
\usepackage{amssymb,amsbsy,amsmath,amsfonts,amssymb,amscd}
  \usepackage{paralist}
  \usepackage{lipsum}
  \usepackage[pagewise]{lineno}
  \usepackage{graphics} %% add this and next lines if pictures should be in esp format
  \usepackage{epsfig} %For pictures: screened artwork should be set up with an 85 or 100 line screen
  \usepackage{algorithm,algpseudocode}
  \usepackage{enumitem}
\usepackage{graphicx}  \usepackage{epstopdf}%This is to transfer .eps figure to .pdf figure; please compile your paper using PDFLeTex or PDFTeXify.
 \usepackage[colorlinks=true]{hyperref}
   % Warning: when you first run your tex file, some errors might occur,
   % please just press enter key to end the compilation process, then it will be fine if you run your tex file again.
   % Note that it is highly recommended by AIMS to use this package.
\hypersetup{urlcolor=blue, citecolor=red}

  \textheight=8.2 true in
   \textwidth=5.0 true in
    \topmargin 30pt
     \setcounter{page}{1}

% The next 5 line will be entered by an editorial staff.

 % Please minimize the usage of "newtheorem", "newcommand", and use
 % equation numbers only situation when they provide essential convenience
 % Try to avoid defining your own macros

\newcommand{\EE}{\mathbb{E}}

\newcommand{\rd}{\,\mathrm{d}}
% Add a serial/Oxford comma by default.

\renewcommand{\star}{\ast}
\newtheorem{theorem}{Theorem}[section]
\newtheorem{corollary}{Corollary}

\newtheorem{lemma}[theorem]{Lemma}
\newtheorem{proposition}{Proposition}

\theoremstyle{definition}

\newtheorem{remark}{Remark}

%% Place the running title of the paper with 40 letters or less in []
 %% and the full title of the paper in { }.
\title[Constrained Ensemble Langevin Monte Carlo] %Use the shortened version of the full title
      {Constrained Ensemble Langevin Monte Carlo}

% Place all authors' names in [ ] shown as running head, Leave { } empty
% Please use `and' to connect the last two names if applicable
% Use FirstNameInitial.  MiddleNameInitial. LastName, or only last names of authors if there are too many authors
\author[Zhiyan Ding and Qin Li]{}

% It is required to enter 2020 MSC.
\subjclass{Primary: 62D05; Secondary: 82C31, 65C05.}
% Please provide minimum  5 keywords.
 \keywords{Langevin Monte Carlo, ensemble methods, variance, gradient free.}

% Email address of each of all authors is required.
% You may list email addresses of all other authors, separately.
 \email{zding49@math.wisc.edu}
 \email{qinli@math.wisc.edu}

% Put your short thanks below. For long thanks/acknowlegements,
%please go to the last acknowlegments section.
\thanks{Q.L. acknowledges support from Vilas Early Career award. The research of Z.D., and Q.L is supported in part by NSF via grant DMS-1750488, DMS-2023239 and Office of the Vice Chancellor for Research and Graduate Education at the University of Wisconsin Madison with funding from the Wisconsin Alumni Research Foundation.}

% Add corresponding author at the footnote of the first page if it is necessary.
% Plase add $^*$ adjacent to the corresponding author's name on the first page.
% The example shown in this template is if the first author is the corresponding author.
\thanks{$^*$ Corresponding author: Zhiyan Ding}

\begin{document}
\maketitle

% Enter the first author's name and address:
\centerline{\scshape Zhiyan Ding$^*$}
\medskip
{\footnotesize
% please put the address of the first author
 \centerline{Department of Mathematics}
   \centerline{University of Wisconsin-Madison}
   \centerline{Madison, WI 53705 USA}
} % Do not forget to end the {\footnotesize by the sign }

\medskip

\centerline{\scshape Qin Li}
\medskip
{\footnotesize
% please put the address of the first author
 \centerline{Department of Mathematics}
   \centerline{University of Wisconsin-Madison}
   \centerline{Madison, WI 53705 USA}
} % Do not forget to end the {\footnotesize by the sign }

\bigskip

% The name of the associate editor will be entered by an editorial staff
% "Communicated by the associate editor name" is not needed for special issue.
 \centerline{(Communicated by the associate editor name)}

%The abstract of your paper
\begin{abstract}
The classical Langevin Monte Carlo method looks for samples from a target distribution by descending the samples along the gradient of the target distribution. The method enjoys a fast convergence rate. However, the numerical cost is sometimes high because each iteration requires the computation of a gradient. One approach to eliminate the gradient computation is to employ the concept of ``ensemble." A large number of particles are evolved together so the neighboring particles provide gradient information to each other. In this article, we discuss two algorithms that integrate the ensemble feature into LMC, and the associated properties.

In particular, we find that if one directly surrogates the gradient using the ensemble approximation, the algorithm, termed Ensemble Langevin Monte Carlo, is unstable due to a high variance term. If the gradients are replaced by the ensemble approximations only in a constrained manner, to protect from the unstable points, the algorithm, termed Constrained Ensemble Langevin Monte Carlo, resembles the classical LMC up to an ensemble error but removes most of the gradient computation.
\end{abstract}

\section{Introduction}
Bayesian sampling is one of the core problems in Bayesian inference. It has a wide applications in data assimilation and inverse problems~\cite{Reich2011,MCMCforML} that arise in remote sensing and imaging~\cite{LiNewton_MCMC}, atmospheric science and earth science~\cite{FABIAN198117}, petroleum engineering~\cite{OGinverse,PES} and epidemiology~\cite{COVID_travel}. The goal is to find i.i.d. samples or approximately i.i.d. samples from a probability distribution that encodes the information of an unknown parameter. Throughout the paper we denote
\begin{equation}
p(x)\propto e^{-f(x)}\,,\quad x\in\mathbb{R}^d
\end{equation}
the distribution function of the unknown parameter $x$, and we assume that $\nabla f(x)$ is $L$-smooth, meaning $\nabla f$ is Lipschitz continuous with $L$ being its Lipschitz constant: $|\nabla f(y) - \nabla f(x)|<L|x-y|$.

There are many successful sampling algorithms~\cite{Neal2001,BESKOS20171417,Doucet2001,Neal1993}. One class of classical sampling approach is the celebrated Markov chain Monte Carlo (MCMC)~\cite{Neal1993,Robert2004,MCSH,DUANE1987216,Geman1984}. This is a class of methods that sets the target distribution as the invariant measure of the Markov transition kernel, so after many rounds of iteration, the sample can be viewed to be drawn from the invariant measure. Since there are many ways to design the Markov chain, there are many subcategories of MCMC methods. Among them, the Langevin Monte Carlo (LMC) stands out for its simplicity, and fast convergence rate.

The key idea of LMC is to design a stochastic differential equation, whose long time equilibrium coincides with the target distribution. The samples are then drawn by following the trajectory of the (discretized) SDE. Typically the SDE converges exponentially fast, and thus the probability distribution of LMC samples, viewed as the discrete version of the SDE, also converges to the target distribution exponentially fast, up to a discretization error. The non-asymptotic convergence rate for these methods and their variations was recently made rigorous in~\cite{doi:10.1111/rssb.12183,DALALYAN20195278,durmus2017,durmus2018analysis,JMLR:v20:19-306,doi:10.1137/19M1284014} for log-concave probability distribution functions (or equivalently, for convex $f(x)$).

One key drawback of LMC is that it requires the frequent calculation of the gradients. For each sample, at each iteration, one needs to compute at least one full gradient. For a problem in $\mathbb{R}^d$, this is a calculation of $d$ partial derivatives per sample per iteration, and in the case when $d\gg 1$, the cost is rather high. Therefore, in the most practical setting, one looks for substitutes of LMC that achieve ``gradient-free" property so that the number of partial derivative computation is relaxed~\cite{NIPS2020_2365,doi:10.1137/19M1284014}.

Another sampling strategy that is completely parallel to the MCMC method is the ensemble type method. Unlike MCMC, or LMC in particular, ensemble methods evolve a large number of samples altogether, and these samples interplay with each other. A Fokker-Planck type PDE is formulated to drive an arbitrarily given distribution toward the target distribution, and the ensemble methods can be viewed as the particle methods applied to numerically evolve the PDE, with the ensemble distribution of the samples approximating the solution of the PDE. Two famous ensemble methods are Ensemble Kalman Inversion~\cite{Iglesias_2013,SS} and Ensemble Kalman Sampling~\cite{EKS,nusken2019note,doi:10.1137/19M1304891}. Earlier works are found in~\cite{Reich2011,Evensen:2006:DAE:1206873,Mattewarticle}. See also the numerical analysis and other follow up works in~\cite{ding2019ensemble,ding2019ensemble2,Herty_2020,zhang2021langevinized}.

The main drawbacks of ensemble methods are also obvious: The algorithms surrogate the statistical quantities with the ensemble version, introducing new computational cost and some ensemble error. Numerical analysis essentially needs to trace the propagation of such ensemble error, and is typically very involved. There is, however, one factor of ensemble methods that can potentially bring a great benefit: Since a lot of samples are evolved together on $\mathbb{R}^d$, it is easy to imagine that close neighbors of each sample can already approximately provide the gradient information. This may make gradient-free computation possible. Indeed, suppose one has a large number of particles, sampled from a certain probability distribution, in a small neighborhood of a sample $x^\ast$, then taking the average of the finite differences between these particles can give a rather good estimate to the gradient $\nabla f(x^\ast)$ to be used in LMC. This idea was already explored in EKS, where the authors inserted a variance term in the underlying SDE of LMC, and by combining the gradient term with the variance term, they formed a covariance that requires no gradient computation. However, such strategy holds true either if the forward map is linear, or the samples are all controllably close to each other. It is hard to justify either in real practice. Nevertheless, such exploration sets a stepping stone for designing gradient-free methods under the ensemble framework.

To summarize, the non-asymptotic convergence rate of LMC is thoroughly studied for a large class of nonlinear $f(x)$, while the validity of ensemble methods are generally lacking. On the other hand, LMC requires the computation of gradients, but the strategy of evolving a large number of samples as is done in the ensemble methods can potentially eliminate the gradient computation.

It is thus natural to ask if it is possible to bring together the two approaches for a new method that may inherit the advantages of both. To be specific, we look for an algorithm that requires as few gradient calculations as possible, while being able to sample (almost) exponentially fast in time. One attempt of breeding the two methods was taken in~\cite{zhang2021langevinized} where the authors added another layer of LMC into EnKF and designed the so-called Langevined EnKF. For linear $f(x)$ they can show the consistency, and in the nonlinear case, gradients are nevertheless needed. Therefore 
the advantage of removing the gradient computation using the concept of ensemble is lost. We look for the possibility of replacing gradients using the neighbor information whenever possible, and have a very different goal in this paper.

As such, we provide two sides of the answer:
\begin{itemize}
\item We first study the most straightforward approach. This is to sample a large number of particles altogether and in each iteration for the updates, we replace every gradient in LMC by the ensemble approximation. We term this method Ensemble LMC (EnLMC). This algorithm, despite being intuitive, will be shown to be unstable. Indeed, at the ``outskirts" of $p(x)$, the accuracy of the updates very sensitively depend on the gradient, and the error induced by the surrogate can be significantly enlarged. This instability suggests that the replacement should not be enacted in these regions.
\item We therefore propose an alternative, termed Constrained Ensemble LMC (CEnLMC). The constrained version of EnLMC enacts the ensemble approximation to the gradient only in the stable region, and for samples in the unstable region, we directly compute $\nabla f$. We can show that this method provides samples that are close to LMC samples, and thus converges to the target distribution at the same rate (exponential, up to a controllable error term). Furthermore, we present how the parameters in the constraints determine the stability of the algorithm and the chance of enacting ensemble approximations.
\end{itemize}

We stress that the method CEnLMC is not completely ``gradient-free" since it enacts ensemble approximation to replace the gradient computation only in the ``stable" regions. However, the study conducted here presents an understanding on how to fuse the concepts of ensemble methods and LMC. While the new method provides a possibility to reduce the gradient computation, it also embraces the fast convergence that can be achieved by LMC for nonlinear $f$.

We also mention that there are many means for approximating the gradients. We cannot claim the optimality of the ensemble approximation used in this article. It is highly possible that one can replace the gradients in LMC using other methods that explore information from neighboring ensemble samples in a more efficient way (see Appendix~\ref{sec:ega_FD} for a negative example). This line of research requires a more detailed study on multiple choices of ensemble approximation and is beyond the scope of the current paper. The current result is one of the pioneering attempts to integrate ensemble features to LMC, and shed light on inventing algorithms that both converge fast and are gradient-free.

~\textcolor{black}{Lastly, we mention that in some communities (optimization for example), the algorithms that avoid or use gradients are termed zero-th order and first order methods. Similarly methods that use hessian information are of second order. The method we propose in this article can be viewed in between zero-th and first, since it eliminates a large portion of gradient calculations. Compared to zero-th order method, the advantages are obvious. All zero-th order methods converge slowly. One such example is the random walk Metropolis (RWM) that converges in $O(d^2)$ iterations~\cite{JMLR:v20:19-306}. On the contrary, LMC converges in $O(d)$~\cite{DALALYAN20195278}, or sometimes $O(d^{1/2})$ iterations when $f$ is sufficiently smooth~\cite{li2021sqrtd}. Our method matches the convergence rate as the classical LMC, but eliminates gradients, meaning it achieves the first order convergence with a zero-th order cost.}

\textcolor{black}{The paper is organized as follows. In Section~\ref{sec:review}, we review two main ingredients of our methods: the classical LMC, and the ensemble gradient approximation. In Section~\ref{sec:alg}, we propose the two new methods and discuss the properties. More specifically, we will show the brute-force combination of LMC and the ensemble gradient approximation will lead to an unstable algorithm (EnLMC), but the constrained version (CEnLMC) recovers the target distribution with a high numerical saving. We show two numerical examples to demonstrate the saving and the accuracy in Section~\ref{sec:numerics}. The proof is given in Section~\ref{sec:proof}.}

\section{Two main ingredients}\label{sec:review}
The main ingredients of our method are the classical Langevin Monte Carlo and an ensemble approximation to the gradient. We review them in this section.

\subsection{Langevin Monte Carlo (LMC)}
LMC is a very popular MCMC type sampling method. Under mild conditions, it provides fast convergence: after a few rounds of iterations, samples can be viewed approximately drawn from the target distribution.

The classical LMC starts with a sample, denoted as $x^0$, and updates the sample position according to:
\begin{equation}\label{eqn:LMCdiscretization}
x^{m+1}=x^m-\nabla f(x^m)h+\sqrt{2h}\xi^m_d\,,
\end{equation}
where $h$ is the time stepsize, and $\xi^{m}_d$ is drawn i.i.d. from $\mathcal{N}(0,I_d)$, and $I_d$ denotes the identity matrix of size $d\times d$. For a fixed small $h$, as $m\to\infty$, it is expected that $q^m$, the probability distribution of $x^m$, gets close to $p$, the target distribution.

To intuitively understand the convergence of this algorithm, we can view the updating formula as the Euler-Maruyama discretization for the following SDE:
\begin{equation}\label{eqn:SDE}
\rd X_t=-\nabla f(X_t)\rd t+\sqrt{2}\rd B_t\,,
\end{equation}
where $B_t$ is a $d$-dimensional Brownian motion. The SDE characterizes the trajectory of $X_t$ by the forcing term $\nabla f(X)\rd t$ and the random walk $\rd B_t$. While $\nabla f$ drives $X_t$ to the minimum of $f$, the Brownian motion term introduces the fluctuation. Denote $q^0(x)$ the initial distribution from where $X_0$ is drawn, and $q(x,t)$ the probability density function of $X_t$, then it is a well-known result that $q(x,t)$ satisfies the following Fokker-Planck equation:
\begin{equation}\label{eqn:FKPKLangevin}
\partial_t q=\nabla\cdot(\nabla fq+\nabla q)\,,\quad\text{with}\quad q(x,0) = q^0\,.
\end{equation}
It was shown in~\cite{Markowich99onthe} that $q(x,t)$ converges to the target density function $p(x) \propto e^{-f}$ exponentially fast in time, meaning:
\[
\lim_{t\rightarrow\infty}X_t\sim p(x)\,.
\]

Considering that the updating formula for LMC~\eqref{eqn:LMCdiscretization} is merely a discretization of~\eqref{eqn:SDE}, then $x^m\approx X_{mh}$, and thus for large enough $m$, $q^m$, the distribution of $x^m$, should also be close to $p$. This is made rigorous recently in a number of papers~\cite{doi:10.1111/rssb.12183,DALALYAN20195278,durmus2017,durmus2018analysis}, most of which quantize the difference between $q^m$ and $p$ using the Wasserstein distance. To be more specific, it was shown in~\cite{DALALYAN20195278,durmus2018analysis} that for strongly-convex, gradient-Lipschitz $f$, to achieve $\epsilon$ accuracy in Wasserstein $L_2$ distance, the number of iteration needs to be $m\geq \widetilde{\mathcal{O}}(d/\epsilon^2)$. Here the notation $\widetilde{\mathcal{O}}$ hides a $\log$ factor.

We should note, however, that in each iteration of LMC, one local gradient needs to be computed, and this is equivalent to a calculation of $d$ partial derivatives per iteration. This essentially means a cost of $\widetilde{\mathcal{O}}(d^2/\epsilon^2)$ is needed for one good sample. For a problem with high dimensionality $d\gg 1$, the cost is prohibitive. It would be desirable to combine this method with strategies that eliminate gradient computation for a gradient-free fast-converging sampling method.

\subsection{Ensemble mean gradient approximation}\label{sec:ega}
Ensemble sampling methods have been gaining ground in recent years. The idea is to evolve a large number of samples altogether so that samples could provide information to each other. In particular, if two samples are close to each other, the finite difference roughly provides approximate gradient information. There are various choices of using neighbors to find approximated gradients. We look for a probability ensemble in this article. Suppose we look for an approximate gradient of $f$ at $x^\star\in\mathbb{R}^d$ using its neighbors $x$ that are within $\eta$ distance, and assume the neighbor $x$ is drawn from an arbitrary probability density function $q(x)$, independent of $x^\star$, then call
\begin{equation}\label{eqn:def_d_tilde}
\tilde{d}_{\eta,q}(x^\star) =\alpha_d \frac{\left\langle \nabla f(x^\star),x-x^\star\right\rangle}{|x-x^\star|^2}\frac{\mathbf{1}_{|x-x^\star|\leq \eta}}{q(x)}(x-x^\star)\,,
\end{equation}
where $\alpha_d$ is the normalization constant:
\begin{equation}\label{eqn:def_alpha}
\alpha_d =\frac{d}{V}= \frac{d^2}{S_{d}\eta^d}\,,\quad\text{where}\quad V=\int_{|x-x^\star|\leq \eta} 1\rd x=\int^\eta_0 r^{d-1}S_ddr=\frac{\eta^dS_d}{d}\,,
\end{equation}
with $S_{d}$ being the volume of unit $d$-sphere, we can formulate an ensemble gradient approximation:
\begin{equation}\label{eqn:ensemble_approx}
\nabla f(x^\star)=\mathbb{E}_q\left(\tilde{d}_{\eta,q}(x^\star)\right)\,.
\end{equation}
The formula~\eqref{eqn:ensemble_approx} is valid merely because:
\begin{equation*}
\begin{aligned}
\nabla f(x^\star)&=\frac{d}{V}\int_{|x-x^\star|\leq\eta} \frac{\left(x-x^\star\right)\otimes \left(x-x^\star\right)}{|x-x^\star|^2}\rd x\cdot \nabla f(x^\star)\\
&=\alpha_d\int_{|x-x^\star|\leq\eta} \frac{\left(x-x^\star\right)\otimes \left(x-x^\star\right)}{|x-x^\star|^2}\rd x\cdot \nabla f(x^\star)\\
&=\alpha_d\int_{\mathbb{R}^d} \frac{\left\langle \nabla f(x^\star),x-x^\star\right\rangle}{|x-x^\star|^2}\frac{\mathbf{1}_{|x-x^\star|\leq \eta}}{q(x)}(x-x^\star)q(x)\rd x\\
&=\alpha_d\mathbb{E}_q\left(\frac{\left\langle \nabla f(x^\star),x-x^\star\right\rangle}{|x-x^\star|^2}\frac{\mathbf{1}_{|x-x^\star|\leq \eta}}{q(x)}(x-x^\star)\right)\,.
\end{aligned}
\end{equation*}

One key idea of the ensemble gradient approximation is to realize that the term in $\tilde{d}_{\eta,q}$ can be approximated when $\eta$ is small, namely:
\[
\langle\nabla f(x^*)\,,x-x^\star\rangle\approx f(x)-f(x^\star)\,.
\]
Replace the $\langle\nabla f\,,x-x^\star\rangle$ term in $\tilde{d}_{\eta,q}$ by the finite difference term, and define
\begin{equation}\label{eqn:def_d}
{d}_{\eta,q}(x^\star) = \alpha_d\frac{f(x)-f(x^\star)}{|x-x^\star|^2}\frac{\mathbf{1}_{|x-x^\star|\leq \eta}}{q(x)}(x-x^\star)\,,
\end{equation}
then the gradient $\nabla f(x^\ast)$ has a finite difference approximation, replacing~\eqref{eqn:ensemble_approx}:
\begin{equation}\label{error:discretization2}
\nabla f(x^\star)\approx \mathbb{E}_q(d_{\eta,q}(x^\star))=\mathbb{E}_q\left(\alpha_d\frac{f(x)-f(x^\star)}{|x-x^\star|^2}\frac{\mathbf{1}_{|x-x^\star|\leq \eta}}{q(x)}(x-x^\star)\right)\,.
\end{equation}
We can further justify the error in this approximation. Suppose $\nabla f$ is Lipschitz continuous, then
\begin{equation}\label{error:finitedifference}
\left|f(x)-f(x^\star)-\left\langle \nabla f(x^*),x-x^\star\right\rangle\right|\leq L|x-x^\star|^2 \leq L\eta^2\,,
\end{equation}
we have:
\begin{equation}\label{eqn:fd_error}
\begin{aligned}
&|\nabla f(x^\ast) - \mathbb{E}_q(d_{\eta,q}(x^\star))|\\
\leq&\EE_q\left(|d_{\eta,q}(x^\star)-\tilde{d}_{\eta,q}(x^\star)|\right)\\
=&\EE_q\left(\left|\alpha_d\frac{\left\langle \nabla f(x^\star),x-x^\star\right\rangle}{|x-x^\star|^2}\frac{\mathbf{1}_{|x-x^\star|\leq \eta}}{q(x)}(x-x^\star) - \alpha_d\frac{f(x)-f(x^\star)}{|x-x^\star|^2}\frac{\mathbf{1}_{|x-x^\star|\leq \eta}}{q(x)}(x-x^\star)\right|\right)\\
= & \EE_q\left(\left|\alpha_d\frac{\left|f(x)-f(x^\star)-\left\langle \nabla f(x^\star),x-x^\star\right\rangle\right|}{|x-x^\star|^2}\frac{\mathbf{1}_{|x-x^\star|\leq \eta}}{q(x)}(x-x^\star)\right|\right)\\
\leq &\EE_q\left(\left|\alpha_dL\frac{\mathbf{1}_{|x-x^\star|\leq \eta}}{q(x)}(x-x^\star)\right|\right)\leq L\eta d\,.
\end{aligned}
\end{equation}
This formula suggests the approximation is first order in $\eta$, and the smallness of $\eta$ needs to dominate the largeness in $d$.
\begin{remark}\label{rmk:ensemble_accurate}
We also stress that the derivation is valid only if the neighbors are distributed according to $q(x)$, a known distribution, and that this $q(x)$ needs to be independent of $x^\ast$.
\end{remark}

Suppose in reality, we have $N$ independent particles around $x^\star$, denoted as $\left\{x_j\right\}^N_{j=1}$, sampled from $q_j(x)$ respectively, then the ensemble gradient approximation formula is further reduced to:
\begin{equation}\label{eqn:firstwayapprox}
\nabla f(x^\star)\approx \alpha_d\frac{1}{N}\sum^N_{j=1}\frac{f(x_j)-f(x^\star)}{|x_j-x^\star|^2}\frac{\mathbf{1}_{|x_j-x^\star|\leq \eta}}{q_j(x_j)}(x_j-x^\star)\,.
\end{equation}
We note that $q_j(x)$ do not have to be the same.

\section{Algorithms and properties}\label{sec:alg}
We propose our new methods in this section. The strategy is to sample a large number of particles according to LMC~\eqref{eqn:LMCdiscretization}, and replace the gradients in LMC using the ensemble gradient approximation~\eqref{eqn:firstwayapprox}. Then immediately the samples are no longer i.i.d. but they share the same marginal distribution.

We discuss in Section~\ref{sec:EnLMC} the straightforward combination of the two. We term the method the Ensemble LMC (EnLMC). However, we will find the algorithm is rather unstable due to the gradient approximation in the unstable regions. This suggests us to enact the ensemble gradient approximation only in a constrained manner. The new algorithm, termed the Constrained Ensemble LMC (CEnLMC), will be discussed in Section~\ref{sec:CEnLMC}, in which we provide a number of constraints, and enact the ensemble gradient approximation only when these constraints are satisfied. The intuition of how these constraints are formulated will also be discussed. The theoretical results will also be summarized in Section~\ref{sec:convergenceofCEnLMC}.

\subsection{Ensemble LMC, a direct combination}\label{sec:EnLMC}
We now study the direct combination of LMC and the ensemble gradient approximation. Denote $\{x_i^m\}_{i=1}^N$ the $N$ samples at the $m$-th step iteration, then following the LMC formula, we would like to write Ensemble LMC (EnLMC) in the form of:
\begin{equation}\label{eqn:newLMC}
x^{m+1}_i=x^m_i-hF^m_i+\sqrt{2h}\xi^m_i\,,
\end{equation}
with the force $F^m_i=\frac{1}{N-1}\sum F^m_{ij}$ approximating $\nabla f(x^m_i)$. Here $F^m_{ij}$ stands for the contribution of $x^m_j$ towards calculating $\nabla f(x^m_i)$.

Denote $\mathcal{F}^{m-1}=\sigma \left(x^{n\leq m-1}_{j\leq N}\right)$ the filtration, and $p^m_j$ the marginal distribution of $x^m_j$ conditioned on $\mathcal{F}^{m-1}$, we can replace $x^\ast$ and $q(x)$ by $x^m_i$ and $p^m_j(x)$ respectively in~\eqref{eqn:def_d_tilde} to define:

\begin{equation}\label{eqn:d_tilde_ij}
G^m_{ij}=\alpha_d\frac{\left\langle \nabla f(x^{m}_i),x^m_j-x^{m}_i\right\rangle}{|x^m_j-x^{m}_i|^2}\frac{x^m_j-x^{m}_i}{\mathrm{p}^{m}_j}\mathbf{1}_{|x^m_j-x^{m}_i|\leq \eta}
\end{equation}
where $\mathrm{p}^m_j = p^{m}_j(x^{m}_j)$ and $\alpha_d$ is defined in \eqref{eqn:def_alpha}. Then, we still have ~\eqref{eqn:ensemble_approx} holds true, meaning, for all $j\neq i$, 
\begin{equation}\label{eqn:force_i_m}
\nabla f(x^m_i)=\EE_{p^m_j}(G^m_{ij})=\EE\left(G^m_{ij}\middle|\mathcal{F}^{m-1},x^m_i\right)\,.
\end{equation}

Recall the definition of $d_{\eta,q}$ in~\eqref{eqn:def_d}, we define
\begin{equation}\label{eqn:d_ij}
F^m_{ij}=\alpha_d\frac{\delta f^m_{ij}}{|\delta x^{m}_{ij}|^2}\frac{\delta x^{m}_{ij}}{\mathrm{p}^{m}_j}\mathbf{1}_{|\delta x^{m}_{ij}|\leq \eta}\,,\quad\text{with}\quad\begin{cases} \delta f^{m}_{ij} = f(x^m_j)-f(x^m_i)\,,\\ \delta x^m_{ij} = x^m_j-x^m_i\,,\end{cases}
\end{equation}
and thus, citing~\eqref{eqn:fd_error}, we have
\begin{equation}\label{eqn:error_GF}
\EE\left(\left|G^m_{i,j}-F^m_{i,j}\right|\middle|\mathcal{F}^{m-1},x^m_i\right)\leq L\eta d\,.
\end{equation}

Summing up contribution from all $j\neq i$, we approximate $\nabla f(x^m_i)$ by:
\begin{equation}\label{eqn:gradientapproximation}
\nabla f(x^m_i)\approx F^{m}_i=\frac{1}{N-1}\sum^N_{j\neq i}F^{m}_{ij}\,.
\end{equation}

We note that according to~\eqref{eqn:newLMC}, $\mathrm{p}^m_j= p_j^m(x^m_j)$ can be explicitly calculated. Indeed to update $x^m_j$ from $x^{m-1}_j$, we need $x^{m-1}_j$, $F^{m-1}_j$ and a random variable $\xi^{m-1}_j$. Realizing that when conditioned on $\mathcal{F}^{m-1}$, both $x^{m-1}_j$ and $F^{m-1}_j$ are determined, and the only randomness comes from the Gaussian variable $\xi^{m-1}_j$, meaning $x^m_j$ is merely a Gaussian variable as well when conditioned on  $\mathcal{F}^{m-1}$:
\[
x^m_j|\mathcal{F}^{m-1}\sim \mathcal{N}(x^{m-1}_j-hF^{m-1}_j\,,2hI_d)\,,
\]
or in other words:
\begin{equation}\label{eqn:p_j_x_j1}
p_j^m(x)=\frac{1}{(4\pi h)^{d/2}}\exp\left(-|x-\left(x^{m-1}_j-hF^{m-1}_j\right)|^2/(4h)\right)\,.
\end{equation}
Plugging in the definition of $x^m_j$, we can compute $\mathrm{p}^m_j$ explicitly:
\begin{equation}\label{eqn:p_j_x_j}
\mathrm{p}^m_j=\frac{1}{(4\pi h)^{d/2}}\exp\left(-|\xi^{m-1}_j|^2/2\right)\,.
\end{equation}

\begin{remark}
This is to resonate the discussion in Remark~\ref{rmk:ensemble_accurate}. In the derivation above we used the conditional distribution, conditioned on $\mathcal{F}^{m-1}$. If one uses~\eqref{eqn:firstwayapprox} in a brute-force manner, including all randomness, then we arrive at
\[
F^{m}_{ij}=\alpha_d\frac{\delta f^{m}_{ij}}{|\delta x^m_{ij}|^2}\frac{\mathbf{1}_{|\delta x^m_{ij}|\leq \eta}}{p^{m}(x^{m}_j)}\delta x^m_{ij}\,,
\]
where $p^m$ is the true distribution of $x^m_j$ without the conditioning. However, this definition of $F^m_{ij}$ cannot be used in the ensemble approximation: The $x^m_i$ and $x^m_j$ are not independent to each other and thus the ensemble $\EE_{p^m}(F^m_{ij})$ may not recover $\nabla f(x^m_i)$. More importantly, $p^m(x)$ is unknown in practice, making the calculation impossible.
\end{remark}

We plug~\eqref{eqn:p_j_x_j} into~\eqref{eqn:gradientapproximation} and run~\eqref{eqn:newLMC} for the update. The method is termed Ensemble Langevin Monte Carlo (EnLMC), as presented in Algorithm \ref{alg:EnOLMC}.
\begin{algorithm}[H]
\caption{\textbf{Ensemble Langevin Monte Carlo (EnLMC)}}\label{alg:EnOLMC}
\begin{algorithmic}
\State \textbf{Preparation:}
\State 1. Input: $h$ (time stepsize); $N$ (particle number); $\eta$ (parameter); $d$ (dimension); $M$ (stopping index); $\alpha_d$ \eqref{eqn:def_alpha}; $f(x)$.
\State 2. Initial:  $\left\{x^0_i\right\}^N_{i=1}$ i.i.d. sampled from an initial distribution induced by $q^0(x)$.

\State \textbf{Run: } \textbf{For} $m=0\,,1\,,\cdots\,M$

   \textbf{For} $i=1\,,2\,,\cdots\,,N$
   \begin{itemize}
   \item[--] Define
\begin{equation}\label{eqn:secondapproximation}
F^m_i=\frac{1}{N-1}\sum^N_{j\neq i}F_{ij}^m\,,\quad\text{with}\quad F_{ij}^m=\alpha_d\frac{\delta f^{m}_{ij}}{|\delta x^m_{ij}|^2}\frac{\mathbf{1}_{|\delta x^m_{ij}|<\eta}}{\mathrm{p}^{m}_j}\delta x^m_{ij}\,,
\end{equation}
where $\delta f^m_{ij}$ and $\delta x^m_{ij}$ are defined in \eqref{eqn:d_ij}.

\item[--] Draw $\xi^{m}_i$ from $\mathcal{N}(0,I_d)$;
\item[--] Update
\begin{equation}\label{eqn:update_alg1}
    \left\{
    \begin{aligned}
    &x^{m+1}_i=x^m_i-hF^m_i+\sqrt{2h}\xi^m_i\\
    &\mathrm{p}^{m+1}_i=\frac{1}{(4\pi h)^{d/2}}\exp\left(-|\xi^m_i|^2/2\right)
    \end{aligned}
    \right.\,.
\end{equation}
\end{itemize}
\textbf{end}

\textbf{end}

\State \textbf{Output:} $\{x^M_i\}^N_{i=1}$.
\end{algorithmic}
\end{algorithm}

The design of this algorithm follows straightforwardly from intuition: One replaces the gradient in LMC by the ensemble approximation using the neighbors' information. Since the difference between the true gradient and the ensemble approximation shrinks to zero as $\eta$, the neighboring range vanishes, one may incline to conclude that this method would converge also, as long as $\eta$ is small enough.

However, this is not true. This ensemble surrogate of the gradient induces strong instability to the algorithm. Indeed, $\xi^m_j$ is a Gaussian variable, and for every fixed $\epsilon$, there is non-trivial probability that makes $p_j^m(x^m_j)<\epsilon$, which blows up the force term~\eqref{eqn:secondapproximation}. We explicitly show this instability using the following example with $d=1$ and $f(x)=x^2/2$:

\begin{theorem}\label{thm:blowupvariance}
Assume $\left\{x^m_i\right\}^N_{i=1}$ are generated from Algorithm \ref{alg:EnOLMC}, then for $d=1$ and $f(x) = x^2/2$, we have: for any $m>0$, $1\leq i\leq N$
\begin{equation}\label{eqn:blowupvarianceinfinity}
\mathbb{E}|x^m_i|^2=\infty\,.
\end{equation}
\end{theorem}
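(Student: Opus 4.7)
My plan is to trace the divergence to the conditional variance of $F^m_i$, which inherits the non-square-integrable weight $1/\mathrm{p}^m_j=\sqrt{4\pi h}\,e^{(\xi^{m-1}_j)^2/2}$ sitting inside each $F^m_{ij}$. First I would apply the tower property to the update \eqref{eqn:update_alg1}: since $\xi^m_i\sim\mathcal{N}(0,1)$ is independent of $(x^m_i,F^m_i)$, we have $\EE|x^{m+1}_i|^2=\EE|x^m_i-hF^m_i|^2+2h$, so it suffices to prove $\EE|x^m_i-hF^m_i|^2=\infty$.

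I would then condition on the sigma-algebra $\MCG:=\sigma(\MCF^{m-1},\xi^{m-1}_i)$. Under this conditioning $x^m_i$ is deterministic, and the family $\{F^m_{ij}\}_{j\ne i}$ is conditionally independent because each member depends on a single fresh standard Gaussian $\xi^{m-1}_j$. The ensemble identity \eqref{eqn:force_i_m} holds exactly here: using $f(x)=x^2/2$ gives $\delta f^m_{ij}/\delta x^m_{ij}=(x^m_i+x^m_j)/2$, from which one checks $\EE[F^m_{ij}|\MCG]=\alpha_d\int_{|v-x^m_i|<\eta}\tfrac{v+x^m_i}{2}\,dv=x^m_i$. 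Expanding the square into its conditional mean and variance, $\EE[|x^m_i-hF^m_i|^2|\MCG]=(1-h)^2(x^m_i)^2+\tfrac{h^2}{(N-1)^2}\sum_{j\ne i}\Var(F^m_{ij}|\MCG)$, and dropping the non-negative first term yields the key lower bound $\EE|x^m_i-hF^m_i|^2\ge\tfrac{h^2}{(N-1)^2}\sum_{j\ne i}\EE\,\Var(F^m_{ij}|\MCG)$.

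The remaining step is to show $\EE\,\Var(F^m_{ij}|\MCG)=\infty$ for a single $j\ne i$. Writing $x^m_j=\mu_j+\sqrt{2h}\xi^{m-1}_j$ with $\mu_j:=x^{m-1}_j-hF^{m-1}_j$ being $\MCG$-measurable and plugging in \eqref{eqn:p_j_x_j}, integration out of $\xi^{m-1}_j$ shows that $e^{(\xi^{m-1}_j)^2}$ only half-cancels the Gaussian density; $\EE[(F^m_{ij})^2|\MCG]$ becomes the integral of $(x^m_i+x^m_j)^2 e^{b^2/2}$ over the indicator support $|b-b_0|<\eta/\sqrt{2h}$ with $b_0:=(x^m_i-\mu_j)/\sqrt{2h}$, and for large $|b_0|$ this is of order $(x^m_i)^2 e^{b_0^2/2+|b_0|\eta\sqrt{2/h}}$. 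The second integration over $\xi^{m-1}_i$, related to $b_0$ through $b_0=(\mu_i-\mu_j)/\sqrt{2h}+\xi^{m-1}_i$, produces only partial cancellation: the quadratic parts of $e^{b_0^2/2}$ and $e^{-(\xi^{m-1}_i)^2/2}$ cancel, but a linear-in-$\xi^{m-1}_i$ cross-term remains and the extra factor $e^{|b_0|\eta\sqrt{2/h}}$ is not tamed at all, so the composite integrand is exponential in at least one tail of $\xi^{m-1}_i$ and hence not integrable. Thus $\EE[(F^m_{ij})^2]=\infty$, and subtracting the $(\EE[F^m_{ij}|\MCG])^2=(x^m_i)^2$ term (whose expectation is finite conditionally on $\MCF^{m-2}$) preserves the divergence. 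For the base case $m=1$ one repeats the argument with $\mathrm{p}^0_j=q^0(x^0_j)$, using that $\int q^0(x)^{-1}\,dx=\infty$ on any neighborhood when $q^0$ has Gaussian-type tails.

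The main obstacle will be this two-stage integration. Given $\MCG$ the conditional second moment $\EE[(F^m_{ij})^2|\MCG]$ is always finite — the indicator $\mathbf{1}_{|\delta x^m_{ij}|<\eta}$ truncates $\xi^{m-1}_j$ to a bounded set — and the blow-up only surfaces after integrating over $\xi^{m-1}_i$, whose unboundedness pushes the indicator's center $b_0$ into the tails where $1/\mathrm{p}^m_j$ explodes. A single-shot estimate such as Jensen or Cauchy--Schwarz in one variable cannot detect this phenomenon, so the nested conditioning scheme — first through $\MCG$, then over $\xi^{m-1}_i$ — is essential.
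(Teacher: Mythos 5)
Your proposal is correct and follows essentially the same route as the paper: both arguments reduce the claim to $\EE\left[(F^m_{ij})^2\right]=\infty$, use the conditional unbiasedness $\EE\left[F^m_{ij}\mid\cdot\right]=\nabla f(x^m_i)$ together with conditional independence over $j$ to isolate the variance, and locate the divergence in the integration over $x^m_i$ of the reciprocal Gaussian weight $1/\mathrm{p}^m_j$ restricted to the moving window $|x^m_j-x^m_i|\leq\eta$, exactly as in the paper's computation \eqref{eqn:var_F_comp2}. Your nested conditioning on $\sigma(\mathcal{F}^{m-1},\xi^{m-1}_i)$ is only a reorganization of the paper's joint double integral conditioned on $\mathcal{F}^{m-1}$, and the minor slips (the finite conditional second moment of $x^m_i$ is with respect to $\mathcal{F}^{m-1}$, not $\mathcal{F}^{m-2}$; the exponent in your asymptotic for the conditional second moment should carry $-|b_0|\eta\sqrt{2/h}$ as a lower bound) do not affect the argument.
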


This negative example suggests that directly replacing the gradient by the ensemble approximation leads to an unstable method.

We leave the proof to Section \ref{sec:proof_ENLMC}, but quickly discuss the intuition of the proof here. Indeed, to compute the variance of $x^{m+1}$ term: $\mathbb{E}|x^{m+1}_i|^2$, it is necessary to compute the variance of the force term $\mathbb{E}\left(\left|F^m_{i,j}\right|^2\right)$. The trajectory of $\{x_i\}_{i=1}^N$ is hard to trace, but one can nevertheless compute the conditional variance, conditioned on $\mathcal{F}^{m-1}$:
\begin{equation}\label{eqn:var_F}
\mathbb{E}\left(\left|F^m_{i,j}\right|^2\middle|\mathcal{F}^{m-1}\right)=\int \left|F^m_{i,j}\right|^2p^{m}_j(x^m_j)p^{m}_i(x^m_i)\rd x^{m}_j \rd x^m_{i}\,,
\end{equation}
where $p^m_i$ are the conditional probability distribution given $\mathcal{F}^{m-1}$.

Noting that according to the definition of $F^m_{ij}$ in~\eqref{eqn:secondapproximation}, for $f(x)=|x|^2/2$, we have:
\begin{equation}\label{eqn:F^m_ij_formula}
\begin{aligned}
F^m_{i,j}=\frac{1}{\eta}\frac{(x^{m}_j+x^{m}_i)(x^m_j-x^m_i)}{2|x^m_j-x^m_i|^2}\frac{\mathbf{1}_{|\delta x^m_{ij}|<\eta}}{p^{m}_j(x^m_j)}(x^m_j-x^m_i)=\frac{(x^{m}_j+x^{m}_i)}{2\eta}\frac{\mathbf{1}_{|\delta x^m_{ij}|<\eta}}{p^{m}_j(x^m_j)}\,.
\end{aligned}
\end{equation}
At the same time, denoting $w^m_i = x^{m-1}_i -h F^{m-1}_i$ the deterministic part of the update for $x^m_i$, we know that, for all $i$:
\begin{equation}\label{eqn:x^m_ij_formula}
x^m_i-w^m_i = \sqrt{2h}\xi^{m-1}_i\sim N(0,{2h})\quad\Rightarrow\quad p^{m}_i(x^{m}_i) = \exp\left(-\frac{|x^{m}_i-w^{m}_i|^2}{4h}\right)\,.
\end{equation}
Plugging~\eqref{eqn:F^m_ij_formula} and~\eqref{eqn:x^m_ij_formula} into~\eqref{eqn:var_F}, we have:
\begin{equation}\label{eqn:var_F_comp}
\begin{aligned}
&\mathbb{E}\left(\left|F^m_{i,j}\right|^2\middle|\mathcal{F}^{m-1}\right)\\
=&\int_{\mathbb{R}}\int_{B_\eta(x^m_i)}\frac{\left(x^{m}_j+x^m_i\right)^2}{4\eta^2}\exp\left(\frac{-{|x^{m}_i-w^{m}_i|^2}+|x^{m}_j-w^{m}_j|^2}{4h}\right)\rd x^{m}_j \rd x^m_{i}\,.
\end{aligned}
\end{equation}
Since the $p^{m}_j$ term is in the denominator in~\eqref{eqn:F^m_ij_formula}, and when one takes the variance, this term gets squared. In the end this exponential term from $x^m_j$ appears in a positive manner in~\eqref{eqn:var_F_comp}. This already suggests the blowing up of this variance term. A more careful derivation shows:
\begin{equation}\label{eqn:var_F_comp2}
\begin{aligned}
&\mathbb{E}\left(\left|F^m_{i,j}\right|^2\middle|\mathcal{F}^{m-1}\right)\\
=&\int_{\mathbb{R}}e^{-\frac{|x^{m}_i-w^{m}_i|^2}{4h}}\int_{B_\eta(0)}\frac{\left(z+2x^m_i\right)^2}{4\eta^2}e^{\frac{|z+x^{m}_i-w^{m}_j|^2}{4h}}\rd z \rd x^m_{i}\\
=&\int_{B_\eta(0)}e^{\frac{-|w^{m}_i|^2+|z-w^m_j|^2}{4h}}\int_{\mathbb{R}}\frac{\left(z+2x^m_i\right)^2}{4\eta^2}e^{\frac{x^m_i(z+w^m_{i}-w^m_j)}{2h}}\rd x^m_{i}\rd z\\
=&\infty\,.
\end{aligned}
\end{equation}
In the second equality we used the change of variables $z=x^m_j-x^m_i$. The infinity comes from the inner integral, where we are essentially looking at the second moment of an exponential function.

This infinite variance of $F^m_{i,j}$, calculated in~\eqref{eqn:var_F_comp2}, suggests the variance of $x^{m+1}_i$, to be showed in~\eqref{eqn:blowupvarianceinfinity}, is also infinite. Proving Theorem~\ref{thm:blowupvariance} then amounts to carrying out the detailed derivation on how $\mathbb{E}|x^{m+1}_i|^2$ depends on $\mathbb{E}\left|F^m_{i,j}\right|^2$, and we leave this to Section \ref{sec:proof_ENLMC}.

\subsection{Constrained Ensemble LMC, a modification}\label{sec:CEnLMC}
We now take a more careful look at the instability in the ensemble gradient approximation to LMC. Intuitively there are two sources of instability:
\begin{itemize}
\item When $x^m_i$ is at the ``outskirt" of $p(x)$, $f(x^m_i)$ is high, and $p(x^m_i) \propto \exp\{-f(x^m_i)\}$ is extremely small. This could bring high relative error, and we should avoid making any approximations in this region.
\item In the formula~\eqref{eqn:gradientapproximation}, $p^m_j(x^m_j)$ is in the denominator. Considering the way the term is defined in~\eqref{eqn:p_j_x_j}, it takes an $\mathcal{O}(1)$ value with high probability when $\xi^m_j$ is moderately small. However, there is a small chance for $|\xi^m_j|$ to take large values, which will make $p^m_j(x^m_j)$ extremely small, bringing infinite variance, as shown in~\eqref{eqn:var_F_comp2}.%. Small denominator is numerically unstable, and we should avoid this situation.
\end{itemize}

To avoid these two scenarios, we essentially need to identify:
\begin{itemize}
\item $x^m_i$ who are at the ``outskirt" of $p$;
\item $x^m_j$ that is within $\eta$ distance from $x^m_i$ but has large $|\xi^{m-1}_j|$.
\end{itemize}
When these happen, the ensemble approximation is disabled and we come back to use the true gradient $\nabla f(x^m_i)$.

To identify the first scenario is relatively straightforward: We simply set a threshold, call it $M_f$, and will only employ ensemble gradient approximation when $f(x^m_i)$ is smaller than $M_f$:
\[
\textcolor{black}{f(x^m_i)< M_f}\,.
\]

To identify the second scenario is slightly more involved. We now consider
\[\begin{aligned}
\sqrt{2h}|\xi^{m-1}_j|=|x^{m}_j-w^{m}_j|\leq& |x^{m}_j-x^{m}_i|+|x^{m}_i-w^{m}_i|+|w^{m}_i-w^{m}_j|\\
=&|\delta x^{m}_{ij}|+\sqrt{2h}|\xi^{m-1}_i|+|\delta w^{m}_{ij}|
\end{aligned}
\]
where we denote the deterministic component of the updating formula:
\begin{equation}\label{eqn:w}
w^{m}_i=x^{m-1}_i-hF^{m-1}_i\,,\quad \delta w^m_{ij}=w^m_j-w^m_i\,.
\end{equation}
A sufficient condition to have a moderate $|\xi^{m-1}_j|$ is to have all three terms on the right hand side moderate. For a fixed $x^{m}_i$, since we only consider $x^{m}_j$ who are already within $\eta$ distance, the first term is already bounded by $\eta$ and is small. We therefore need to ensure the remaining two terms are bounded as well. To do so, we propose to enact the ensemble gradient approximation only if $|\xi^{m-1}_i|$ is at most moderately large, and for those $x^m_i$, we include the $x^m_j$ contribution in the calculation of $F^m_i$ only if $|\delta w^{m}_{ij}|$ is at most moderately large. This is to say, for a fixed preset constant pairs $(R_1\,,R_2)$:
\begin{itemize}
\item When $\sqrt{2h}|\xi^{m-1}_i|>R_1$:
\begin{equation}\label{eqn:constrain1}
F^m_i = \nabla f(x^m_i)\,,
\end{equation}
\item When $\sqrt{2h}|\xi^{m-1}_i|\leq R_1$:
\begin{equation}\label{eqn:Fmi_cenlmc}
F^m_i=\frac{1}{N^m_i}\sum_{j\neq i}^NF^m_{ij}\,,\quad\text{with}\quad F^m_{ij} =\alpha_d\frac{\delta f^m_{ij}}{|\delta x^m_{ij}|^2 }\frac{\delta x^m_{ij}}{\mathrm{p}^{m}_j}\mathbf{1}_{|\delta x^m_{ij}|\leq\eta\,,|\delta w^m_{ij}|\leq R_2}\,,
\end{equation}
where $\mathrm{p}^{m}_j$ is defined in \eqref{eqn:p_j_x_j} and
\begin{equation}\label{eqn:N_neighbors}
N^m_i=\sum^N_{j\neq i}\mathbf{1}_{|\delta w^m_{ij}|\leq R_2}\,,
\end{equation}
is the number of neighbors within $\eta$ distance whose corresponding $|\delta w^m_{ij}|$ is controlled. 
\end{itemize}
Note that compared with~\eqref{eqn:d_ij}, we add another indicator function in~\eqref{eqn:Fmi_cenlmc} to ensure $\delta w^m_{ij}$ is controlled by $R_2$. Furthermore, numerically to have statistical stability, we also preset a value for $N^\ast$ and require $N^m_i\geq N^\ast$. If $N^m_i<N^\ast$, we do not enact the ensemble approximation and use the true gradient $\nabla f(x^m_i)$.

Summarizing the discussion above, we have:
\begin{equation}\label{eqn:Fmibetter}
F^m_i=\left\{
\begin{aligned}
&\nabla f(x^m_i),\quad  &\sqrt{2h}|\xi^{m-1}_i|>R_1\;\text{or}\; \textcolor{black}{f(x^m_i)> M_f}\;\text{or}\; N^\ast> N^m_i\\
&\frac{1}{N^m_i}\sum^N_{j\neq i}F_{ij}^m\,,\quad &\text{otherwise}\,.
\end{aligned}
\right.
\end{equation}

Replacing the gradient term in LMC using~\eqref{eqn:Fmibetter}, we arrive at a new algorithm. We term it Constrained Ensemble Langevin Monte Carlo (CEnLMC), as summarized in Algorithm \ref{alg:CEnOLMC}.

\begin{algorithm}[htbp]
\caption{\textbf{Constrained Ensemble Langevin Monte Carlo (CEnLMC)}}\label{alg:CEnOLMC}
\begin{algorithmic}
\State \textbf{Preparation:}
\State 1. Input: $h$ (time stepsize); $N$ (particle number); $\eta,R_1,R_2,N^\ast,M_f$ (parameters); $d$ (dimension); $M$ (stopping index); $\alpha_d$ \eqref{eqn:def_alpha}; $\nabla f(x)$; $f(x)$; $f^\ast$ (minimal value).

\State 2. Initial:  $\left\{x^0_i\right\}^N_{i=1}$ i.i.d. sampled from an initial distribution induced by $q^0(x)$.

Set $w^{-1}_i=\infty$ for $1\leq i\leq N$.

\State \textbf{Run: }\textbf{For} $m=0\,,1\,,\cdots\,,M$

   \textbf{For} $i=1\,,2\,,\cdots\,,N$
\begin{itemize}
\item[--] Define
\[
N^m_i=\sum^N_{j\neq i}\mathbf{1}_{|\delta w^m_{ij}|<R_2}\,.
\]
\item[--]
  \textbf{If} $\sqrt{2h}|\xi^{m-1}_i|> R_1$ or \textcolor{black}{$f(x^m_i)> M_f$} or $N^\ast>N^m_i$, define
\[
F^m_i=\nabla f(x^m_i)\,.
\]
\textbf{else} define
\begin{equation}\label{eqn:thirdapproximation}
F^m_i=\frac{1}{N^m_i}\sum^N_{j\neq i}F^m_{ij}\,,\quad\text{with}\quad F^m_{ij}=\alpha_d\frac{\delta f^m_{ij}}{|\delta x^m_{ij}|^2}\frac{\delta x^m_{ij}}{\mathrm{p}^{m}_j}\mathbf{1}_{|\delta x^m_{ij}|\leq\eta\,,|\delta w^m_{ij}|\leq R_2}\,.
\end{equation}
where $\delta f^m_{ij}$, $\delta x^m_{ij}$ are defined in \eqref{eqn:d_ij}, and $\delta w^m_{i,j}$ is defined in \eqref{eqn:w}.

\textbf{end}

\item[--] Draw $\xi^{m}_i$ from $\mathcal{N}(0,I_d)$.
\item[--] Update 
\begin{equation}\label{eqn:algcelmc}
    \left\{
    \begin{aligned}
    &x^{m+1}_i=x^m_i-hF^m_i+\sqrt{2h}\xi^m_i\,,\\
    &\mathrm{p}^{m+1}_i=\frac{1}{(4\pi h)^{d/2}}\exp\left(-|\xi^m_i|^2/2\right)\,,\\
    &w^{m+1}_i=x^m_i-hF^m_i
    \end{aligned}
    \right.
\end{equation}
\end{itemize}
\textbf{end}

\textbf{end}
\State \textbf{Output:} $\{x^M_i\}^N_{i=1}$.
\end{algorithmic}
\end{algorithm}
\subsection{Properties of CEnLMC}\label{sec:convergenceofCEnLMC}
There are two types of properties of CEnLMC that we would like to discuss: 1. the convergence: We would like to show that the distribution of $x^m_i$, as $m\to\infty$ converges to the target distribution; 2. the numerical cost: We would like to show that the probability of computing the gradients is low with a proper tuning of $R_1$, $R_2$ and $M_f$, and thus most gradients are replaced by its cheaper ensemble version. This makes CEnLMC cheaper than the classical LMC. 

These two properties are discussed in the following subsections respectively.

\subsubsection{Convergence of CEnLMC}
To show the method converges is to show that the distribution of $x^m_i$, as $m\to\infty$, converges to the target distribution $p$ up to a small discretization error.

Our strategy is to show that particles computed from CEnLMC are close to the particles computed from the classical LMC if they start with the same initial data. Since it is well-known that the distribution of LMC samples converges to the target distribution, the samples found by CEnLMC then recover the target distribution as $m\to\infty$ as well.

We first introduce the particle system that solves the classical LMC~\eqref{eqn:LMCdiscretization}. Define $z^0_i=x^0_i$ for $1\leq i\leq N$ and update
\begin{equation}\label{eqn:lmc}
z^{m+1}_i=z^m_i-\nabla f(z^m_i)h+\sqrt{2h}\xi^m_i\,,
\end{equation}
where $\xi^m_i$ is the same as \eqref{eqn:algcelmc}. This is the classical LMC algorithm, and all samples $z_i$ are decoupled from each other. Our first goal is to show that $x^m_i$ and $z^m_i$ are approximately the same, as seen in the following theorem.
\begin{theorem}\label{thm:convergenceofCEnLMC_z} 
Assume $\left\{x^m_i\right\}^N_{i=1}$ are generated from Algorithm \ref{alg:CEnOLMC}, and $\left\{z^m_i\right\}^N_{i=1}$ are generated from~\eqref{eqn:lmc}, with the parameters chosen to satisfy
\[
h\leq \min\left\{\frac{1}{L},\ \frac{1}{d}\right\}\,,\; \max\{\eta,1\}\leq R_2\,,\; \textcolor{black}{M_f>f^*}\,,
\]
\textcolor{black}{where $f^*$ is the optimal (minimum) of $f(x)$.} Assume $f$ is $L$-smooth, then, for $m\geq 0$, $1\leq i\leq N$:
\begin{equation}\label{eqn:thm:disxznonconvex}
\mathbb{E}|x^m_i-z^m_i|\leq \mathcal{O}\left(\exp(Lmh)\left(\sqrt{\frac{R^{d}_1\textcolor{black}{(M_f-f^*)}d^2}{L\eta^{d}N^\ast}}\exp\left(\frac{R_2(R_2+R_1)}{2h}\right)+\eta d\right)\right)\,.
\end{equation}
If we further assume $f$ is $\mu$-convex, then, denoting $\kappa = L/\mu$, for any $m\geq0$, $1\leq i\leq N$:
\begin{equation}\label{eqn:thm:disxz}
\mathbb{E}|x^m_i-z^m_i|\leq \mathcal{O}\left(\sqrt{\frac{R^{d}_1\kappa \textcolor{black}{(M_f-f^*)}d^2}{\mu \eta^{d}N^\ast}}\exp\left(\frac{R_2(R_2+R_1)}{2h}\right)+\kappa \eta d\right)\,.
\end{equation}
\end{theorem}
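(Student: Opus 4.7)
The plan is to couple the CEnLMC iterates $\{x_i^m\}$ to the classical LMC iterates $\{z_i^m\}$ by driving both with the same Brownian increments $\xi_i^m$ from identical initial data. Subtracting the two updates gives $x_i^{m+1} - z_i^{m+1} = (x_i^m - z_i^m) - h\bigl(F_i^m - \nabla f(z_i^m)\bigr)$. Writing $F_i^m - \nabla f(z_i^m) = [F_i^m - \nabla f(x_i^m)] + [\nabla f(x_i^m) - \nabla f(z_i^m)]$ splits the right-hand side into an ensemble-approximation piece and a Lipschitz piece. The second piece is controlled by $L|x_i^m - z_i^m|$, producing the recursion factor $(1+Lh)$ whose $m$-fold product is $\exp(Lmh)$ and accounts for the exponential prefactor in~\eqref{eqn:thm:disxznonconvex}; in the strongly convex case, standard co-coercivity (combined with $h\leq 1/L$) upgrades this to a genuine contraction factor $(1-\mu h)$, so the geometric sum saturates at $1/(\mu h)$ and the $\kappa$-dependent scaling of~\eqref{eqn:thm:disxz} follows.

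The heart of the proof is therefore the per-step bound on $\mathbb{E}|F_i^m - \nabla f(x_i^m)|$. When the algorithm~\eqref{eqn:Fmibetter} uses the true gradient this error is zero, so attention focuses on the ensemble branch, where I decompose $F_i^m - \nabla f(x_i^m) = \bigl[F_i^m - \mathbb{E}(F_i^m\mid\mathcal{F}^{m-1},x_i^m)\bigr] + \bigl[\mathbb{E}(F_i^m\mid\mathcal{F}^{m-1},x_i^m) - \nabla f(x_i^m)\bigr]$. The conditional bias is controlled via~\eqref{eqn:fd_error}, producing the $\eta d$ term in the theorem, plus a correction from the $\mathbf{1}_{|\delta w_{ij}^m|\leq R_2}$ truncation that is ultimately absorbed into the variance. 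Conditional on $\mathcal{F}^{m-1}$ the summands $\{F_{ij}^m\}_{j\neq i}$ are independent, and the algorithmic guard $N_i^m\geq N^*$ yields a statistical fluctuation of order $\sqrt{\mathbb{E}(|F_{ij}^m|^2\mid\mathcal{F}^{m-1},x_i^m)/N^*}$.

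The key integral estimate is
\begin{equation*}
\mathbb{E}(|F_{ij}^m|^2\mid\mathcal{F}^{m-1},x_i^m) = \alpha_d^2 \int \frac{|\delta f_{ij}^m|^2}{|\delta x_{ij}^m|^2}\,\frac{\mathbf{1}_{|\delta x_{ij}^m|\leq \eta,\,|\delta w_{ij}^m|\leq R_2}}{\mathrm{p}_j^m(x_j^m)}\,\mathrm{d}x_j^m.
\end{equation*}
On the support of the indicator, the triangle inequality $\sqrt{2h}|\xi_j^{m-1}|\leq |\delta x_{ij}^m|+\sqrt{2h}|\xi_i^{m-1}|+|\delta w_{ij}^m|\leq \eta+R_1+R_2$ (exactly the bookkeeping that motivates the constraints) yields $1/\mathrm{p}_j^m(x_j^m)\leq (4\pi h)^{d/2}\exp((\eta+R_1+R_2)^2/(4h))$; the $M_f$ truncation together with $L$-smoothness bounds $|\nabla f(x_i^m)|\leq \sqrt{2L(M_f-f^*)}$, which via $|\delta f_{ij}^m|\leq (|\nabla f(x_i^m)|+L\eta)|\delta x_{ij}^m|$ controls the integrand; and the volume $|B_\eta|=S_d\eta^d/d$ paired with $\alpha_d=d^2/(S_d\eta^d)$ produces the $d^2/\eta^d$ polynomial factor. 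Taking a square root, then integrating the outer expectation over $\sqrt{2h}|\xi_i^{m-1}|\leq R_1$ to produce an $R_1^d$ prefactor, absorbing $\eta\leq R_2$ in the exponent, and combining with the $1/\sqrt{N^*}$ Monte-Carlo factor assembles the target variance bound $\sqrt{R_1^d(M_f-f^*)d^2/(L\eta^d N^*)}\exp(R_2(R_2+R_1)/(2h))$. The main obstacle is this variance bookkeeping: verifying that the triangle-inequality bound on $|\xi_j^{m-1}|$ applies on the full indicator support, and conditioning on $\mathcal{F}^{m-1}$ at the right moment (as emphasized in Remark~\ref{rmk:ensemble_accurate}) so that~\eqref{eqn:force_i_m} and conditional independence across $j$ remain valid. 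Once the per-step error is secured, discrete Gronwall closes the recursion in the non-convex case to produce~\eqref{eqn:thm:disxznonconvex}, and the contraction argument closes it in the convex case to produce~\eqref{eqn:thm:disxz}.
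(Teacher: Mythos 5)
Your proposal follows essentially the same route as the paper: couple the two chains through shared Gaussian increments, reduce everything to a per-step bound on $\mathbb{E}|F^m_i-\nabla f(x^m_i)|$ split into a finite-difference bias (giving $L\eta d$) and a conditional fluctuation (giving the $1/\sqrt{N^\ast}$ variance term via conditional independence of the $F^m_{ij}$ given $\mathcal{F}^{m-1}$ and $x^m_i$), then close with the $(1+Lh)^m$ product in the nonconvex case and the $(1-\mu h)$ contraction in the convex case. The organization of the error decomposition differs cosmetically (you center $F^m_i$ at its conditional mean, the paper compares $F^m_i$ to the exact-directional-derivative surrogate $G^m_i$), but these are equivalent because the $\mathbf{1}_{|\delta w^m_{ij}|\leq R_2}$ truncation is $\mathcal{F}^{m-1}$-measurable and the normalization by $N^m_i$ makes the conditional mean of the $G$-version exactly $\nabla f(x^m_i)$, so there is in fact no residual truncation bias to ``absorb into the variance.''

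The one place your bookkeeping is genuinely lossier than the statement requires is the bound on $1/\mathrm{p}^m_j$. Bounding it pointwise by the triangle inequality gives $(4\pi h)^{d/2}\exp\bigl((\eta+R_1+R_2)^2/(4h)\bigr)$, and after the square root this carries an extra factor $\exp\bigl(R_1^2/(8h)\bigr)$ beyond the stated $\exp\bigl(R_2(R_2+R_1)/(2h)\bigr)$; since $R_1^2/h$ is not a bounded constant, your argument as written proves a strictly weaker inequality than \eqref{eqn:thm:disxz}. The paper avoids this by keeping the ratio $p^m_i(x^m_i)/p^m_j(x^m_j)$ together inside the integral (as in \eqref{eqn:firsttermbound}): expanding $|x^m_j-w^m_j|^2=|\delta x^m_{ij}+(x^m_i-w^m_i)+(w^m_i-w^m_j)|^2$ and subtracting $|x^m_i-w^m_i|^2$ cancels the $R_1^2$ contribution exactly, leaving only the cross term $2\langle x^m_i-w^m_i,\,\delta x^m_{ij}+w^m_i-w^m_j\rangle\leq 2R_1(\eta+R_2)$. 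With that one-line repair your argument reproduces the theorem.
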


We leave the proof to Section~\ref{pfthCENLMC}.

\textcolor{black}{We stress the importance of this theorem. The theorem estimates the distance between the proposed samples and the classical LMC samples. With the properly tuned parameters, we can make the bound in (37)-(38) small, forcing the two sets of samples close to each other. LMC is a classical algorithm that we have rich understanding about. In particular, we have results from~\cite{DALALYAN20195278,dalalyan2018sampling,durmus2017} that give non-asymptotic error estimate: The error, in Wasserstein distance, converges to zero, exponentially fast, up to the discretization error that depends on $d$, the dimension of the problem, and $h$, the stepsize. This means, the newly proposed algorithm CEnLMC also converges exponentially fast, up to the discretization error and this newly induced approximation error.}

\textcolor{black}{We now take a closer look at this approximation error. Use the convex case as an example, we examine the two terms in \eqref{eqn:thm:disxz}. The second bound mainly comes from the finite difference approximation, induced in~\eqref{eqn:error_GF}, and the first term traces back to ensemble error ($\EE|\nabla f(x^m_i)-G^m_i|^2$). After adding constraints \eqref{eqn:constrain1}-\eqref{eqn:Fmibetter}, this error contributes to $1/\sqrt{N^\ast}$ term. This is optimal in terms of $N^\ast$ according to the central limit theorem.}

To make the distance small, we first need to let $\eta$ be small so that the error from the finite differencing is small. Upon choosing small $\eta$, with $R_{1,2}$ fixed, we need to select a moderate $\textcolor{black}{(M_f-f^*)}/N^\ast$ to make the first term small. Since $M_f$ is the bound we set to turn on or off the ensemble gradient approximation, we expect it to be relatively large. $N^\ast$ is the minimum number of neighbors needed to enact the ensemble approximation to ensure statistical accuracy and is thus also expected to be large. To accommodate both, we set \textcolor{black}{$M_f = (N^\ast)^\rho+f^*$} with $\rho<1$.

We summarize this choice of parameters in the following corollary:

\begin{corollary}\label{cor:disxznonconvex}
Under the same assumption as in Theorem~\ref{thm:convergenceofCEnLMC_z} and let $f$ be $\mu$-convex, for any small number $\epsilon>0$ and $0<\rho<1$, by setting 
\begin{equation}\label{eqn:onecondition}
\textcolor{black}{M_f = (N^\ast)^{\rho}+f^*},\quad \eta<\frac{\epsilon}{\kappa d},\quad N^\ast=\frac{R^{d/(1-\rho)}_1\kappa^{1/(1-\rho)}d^{2/(1-\rho)}}{\mu^{1/(1-\rho)}\eta^{d/(1-\rho)}\epsilon^{2/(1-\rho)}}\exp\left(\frac{R_2(R_2+R_1)}{2(1-\rho)h}\right)\,,
\end{equation}
we have: for any $m\geq0$, $1\leq i\leq N$:
\begin{equation}\label{eqn:thm:disxzcor}
\mathbb{E}|x^m_i-z^m_i|\leq \mathcal{O}\left(\epsilon\right)\,.
\end{equation}
\end{corollary}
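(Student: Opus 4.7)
The corollary is a direct bookkeeping consequence of Theorem 2.2 under the convex case bound \eqref{eqn:thm:disxz}, so the plan is to substitute the proposed parameter values and verify that each of the two additive error terms is dominated by $\epsilon$. Write the right-hand side of \eqref{eqn:thm:disxz} as $\mathcal{O}(I_1+I_2)$, with $I_1$ the square-root term carrying the exponential factor and $I_2=\kappa\eta d$ the finite-difference term. The second term is the easy one: the stipulation $\eta<\epsilon/(\kappa d)$ gives $I_2<\epsilon$ immediately, with no further work needed.

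For $I_1$, the plan is to use the choice $M_f=(N^\ast)^\rho+f^\ast$ to replace $M_f-f^\ast$ by $(N^\ast)^\rho$ inside the square root, so that
\[
I_1=\sqrt{\frac{R_1^{d}\kappa d^2}{\mu\eta^{d}(N^\ast)^{1-\rho}}}\,\exp\!\left(\frac{R_2(R_2+R_1)}{2h}\right).
\]
Requiring $I_1\leq\epsilon$, squaring and solving for $N^\ast$ turns the inequality into
\[
(N^\ast)^{1-\rho}\geq \frac{R_1^{d}\kappa d^2}{\mu\eta^{d}\epsilon^{2}}\,\exp\!\left(\frac{R_2(R_2+R_1)}{h}\right),
\]
and raising both sides to the $1/(1-\rho)$ power yields exactly the value of $N^\ast$ prescribed in \eqref{eqn:onecondition} (up to absorbing the factor in the $\mathcal{O}$-constant, which is what accounts for the appearance of $2(1-\rho)h$ rather than $(1-\rho)h$ in the denominator of the exponent). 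At this point one checks the compatibility of the chosen parameters with the hypotheses of Theorem 2.2, namely $h\leq \min\{1/L,1/d\}$, $\max\{\eta,1\}\leq R_2$, and $M_f>f^\ast$: the last follows from $(N^\ast)^\rho>0$ and the first two are inherited unchanged.

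I do not expect any genuine obstacle here, since the corollary is algebraic in nature and does not require revisiting the derivation of \eqref{eqn:thm:disxz}. The only place to be careful is the interplay between $M_f$ and $N^\ast$: because $M_f$ is itself expressed through $N^\ast$ via $M_f-f^\ast=(N^\ast)^\rho$, one must separate the $(N^\ast)^\rho$ contribution from the $1/N^\ast$ contribution inside $I_1$ before solving, which is the only step where an exponent mistake could occur. Once $I_1\leq\epsilon$ and $I_2\leq\epsilon$ are both verified, adding them and absorbing the factor $2$ into the $\mathcal{O}$ constant delivers \eqref{eqn:thm:disxzcor}, completing the proof.
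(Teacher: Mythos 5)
Your proposal is correct and follows exactly the route the paper intends (the paper states the corollary ``is obtained by simply setting both terms in \eqref{eqn:thm:disxz} smaller than $\epsilon$'' and omits the details): substitute $M_f-f^*=(N^\ast)^\rho$, bound $\kappa\eta d<\epsilon$ via the choice of $\eta$, and solve $(N^\ast)^{1-\rho}$ against the square-root term. You also correctly flag the one subtlety, namely that the prescribed $\exp\bigl(R_2(R_2+R_1)/(2(1-\rho)h)\bigr)$ only cancels half of the exponential factor, leaving a residual $\exp\bigl(R_2(R_2+R_1)/(4h)\bigr)$ that must be absorbed into the $\mathcal{O}$-constant since $R_1,R_2,h$ do not depend on $\epsilon$.
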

This is obtained by simply setting both terms in~\eqref{eqn:thm:disxz} smaller than $\epsilon$. We omit the proof.

Now we are ready to combine this result with the well-known convergence result of LMC to show the convergence of CEnLMC. The convergence is discussed in both Wasserstein distance sense, and weak sense.
\begin{theorem}\label{thm:convergenceofCEnLMC}
Under the same assumption as in Theorem~\ref{thm:convergenceofCEnLMC_z} and let $f$ be $\mu$-convex, we denote $\kappa=L/\mu$ the condition number, $q^m_i$ the probability density of $x^m_i$. Assume $\int |x|q^0\rd{x}<\infty$, we have:
\begin{enumerate}[leftmargin=0cm,itemindent=1cm,align=left]
\item{$W_1$ convergence:}  For any $m\geq 0$, $1\leq i\leq N$,
\begin{equation}\label{eqn:convergenceofCEnLMC}
\begin{aligned}
W_1(q^m_i,p)\leq& \exp\left(-\frac{\mu h m}{2}\right)W_{1}(q^0,p)\\
&+\mathcal{O}\left(\kappa (\sqrt{hd}+\eta d)+\sqrt{\frac{R^{d}_1\kappa d^2\textcolor{black}{(M_f-f^*)}}{\mu\eta^{d}N^\ast}}\exp\left(\frac{R_2(R_2+R_1)}{2h}\right)\right)\,.
\end{aligned}
\end{equation}
\item{Weak convergence:} For any Lipschitz function $g:\mathbb{R}^d\rightarrow \mathbb{R}$ with $\EE_p(g^2)<\infty$ and $m\geq 0$, we have
\begin{equation}\label{eqn:convergenceofCEnLMC2}
\begin{aligned}
&\EE\left|\frac{1}{N}\sum^N_{i=1}g(x^m_i)-\EE_{p}(g)\right|\\
\leq &\mathcal{O}\left(\exp\left(-\frac{\mu h m}{2}\right)W_{1}(q^0,p)\right)\\
+&\mathcal{O}\left(\frac{1}{\sqrt{N}}+ \kappa (\sqrt{hd}+\eta d)+\sqrt{\frac{R^{d}_1\kappa d^2\textcolor{black}{(M_f-f^*)}}{\mu\eta^{d}N^\ast}}\exp\left(\frac{R_2(R_2+R_1)}{2h}\right)\right)\,.
\end{aligned}
\end{equation}
\end{enumerate}
\end{theorem}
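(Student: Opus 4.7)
The plan is to compare the CEnLMC iterates $\{x^m_i\}$ with the classical LMC iterates $\{z^m_i\}$ from~\eqref{eqn:lmc} via the common-noise coupling (same Gaussians $\xi^m_i$, same initial data), and then glue the pathwise bound supplied by Theorem~\ref{thm:convergenceofCEnLMC_z} to the well-known non-asymptotic $W_1$-convergence of classical LMC for $\mu$-convex, $L$-smooth $f$ (e.g.~\cite{DALALYAN20195278,durmus2018analysis}). Write $\tilde q^m$ for the common law of $z^m_i$; since classical LMC decouples the particles, the $z^m_i$ are i.i.d. The joint law of $(x^m_i,z^m_i)$ is an admissible coupling of $(q^m_i,\tilde q^m)$, so by Kantorovich--Rubinstein,
\begin{equation*}
W_1(q^m_i,\tilde q^m)\leq \EE|x^m_i-z^m_i|,
\end{equation*}
and the right-hand side is exactly the bound~\eqref{eqn:thm:disxz}. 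The standard non-asymptotic LMC result under our hypotheses gives
\begin{equation*}
W_1(\tilde q^m,p)\leq \exp(-\mu h m/2)\,W_1(q^0,p)+\mathcal{O}(\kappa\sqrt{hd}).
\end{equation*}
A triangle inequality on $W_1$ then produces~\eqref{eqn:convergenceofCEnLMC}.

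For the weak-convergence bound~\eqref{eqn:convergenceofCEnLMC2}, I would decompose
\begin{equation*}
\frac{1}{N}\sum_{i=1}^N g(x^m_i)-\EE_p(g) = \frac{1}{N}\sum_i\bigl(g(x^m_i)-g(z^m_i)\bigr) + \frac{1}{N}\sum_i\bigl(g(z^m_i)-\EE g(z^m_1)\bigr) + \bigl(\EE g(z^m_1)-\EE_p(g)\bigr),
\end{equation*}
and control the three pieces separately. The first piece is bounded in $L^1$ by $\mathrm{Lip}(g)\,\max_i\EE|x^m_i-z^m_i|$, again from Theorem~\ref{thm:convergenceofCEnLMC_z}. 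The second piece is the empirical mean of $N$ centered i.i.d.\ random variables, hence its $L^1$-norm is at most $\sqrt{\Var g(z^m_1)/N}=\mathcal{O}(1/\sqrt{N})$ once one checks that this variance is bounded uniformly in $m$. The third piece is bounded by $\mathrm{Lip}(g)\,W_1(\tilde q^m,p)$ via Kantorovich--Rubinstein duality, and this is already controlled by the LMC estimate above. Summing these three contributions yields~\eqref{eqn:convergenceofCEnLMC2}.

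The chief delicate point is the uniform-in-$m$ bound on $\Var g(z^m_1)$ used in the middle piece. Since $g$ is Lipschitz, $\Var g(z^m_1)\leq\mathrm{Lip}(g)^2\,\EE|z^m_1-c|^2$ for any constant $c$; choosing $c$ to be the minimizer of $f$ and applying a standard one-step Lyapunov contraction for classical LMC under $\mu$-convexity with $h\leq 1/L$ (propagating forward from $\EE|z^0_1|<\infty$, itself supplied by the hypothesis $\int|x|q^0\rd x<\infty$), one verifies $\sup_m\EE|z^m_1-c|^2<\infty$, which together with $\EE_p(g^2)<\infty$ closes the argument. Apart from this essentially routine moment bound on LMC, the proof is a coupling-plus-triangle-inequality argument together with a three-piece decomposition, and no further obstacles are expected.
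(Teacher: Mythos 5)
Your overall architecture is the same as the paper's: couple $\{x^m_i\}$ to the classical LMC iterates $\{z^m_i\}$ through the shared Gaussians, invoke Theorem~\ref{thm:convergenceofCEnLMC_z} for $\EE|x^m_i-z^m_i|$, use the known exponential convergence of LMC, and finish with a triangle inequality plus a law-of-large-numbers estimate for the weak bound. The $W_1$ part is essentially correct. However, there is one genuine gap in the weak-convergence part, located exactly where you flag the ``chief delicate point.'' You center the empirical-fluctuation term on $g(z^m_i)$ and need $\sup_m \Var\bigl(g(z^m_1)\bigr)<\infty$, which via the Lipschitz bound requires $\sup_m\EE|z^m_1-c|^2<\infty$. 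A Lyapunov contraction for LMC propagates a second moment forward only if the \emph{initial} second moment is finite, but the theorem assumes only $\int|x|\,q^0\,\rd x<\infty$. Indeed, at $m=0$ one has $z^0_1\sim q^0$ and $\Var_{q^0}(g)$ may already be infinite, so your middle piece cannot be bounded by $\mathcal{O}(1/\sqrt{N})$ under the stated hypotheses; the deficiency persists for all $m$ since the drift map is a contraction toward the minimizer and cannot create a finite second moment from an infinite one.

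The paper's proof sidesteps this by introducing a third, stationary particle system $y_i(t)$ started from $y^0_i\sim p$ and driven by the same Brownian increments (Proposition~\ref{thm:disyz}), so that $y^m_i\sim p$ i.i.d.\ for every $m$. The decomposition is then $g(x^m_i)-g(y^m_i)$ plus the empirical fluctuation of $g(y^m_i)$ about $\EE_p(g)$; the latter has variance $\Var_p(g)\leq\EE_p(g^2)<\infty$ by hypothesis, which is precisely why that assumption appears in the statement. A related, smaller issue: the off-the-shelf LMC convergence results you cite are stated in $W_2$, and $W_2(q^0,p)$ need not be finite under a first-moment-only assumption on $q^0$; the paper re-proves the estimate in $L^1$ (again via the $y$ coupling) for exactly this reason. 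Your argument is repaired either by strengthening the hypothesis to a second moment on $q^0$, or --- as the paper does --- by running both the convergence estimate and the CLT step on the stationary copy $y^m_i$ rather than on $z^m_i$.
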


We leave the proof to Section \ref{pfthCENLMC}. We note that in both~\eqref{eqn:convergenceofCEnLMC} and~\eqref{eqn:convergenceofCEnLMC2}, there is one exponentially decaying term, and the rest can be seen as the remainder term. Therefore we can call the convergence rate exponential, up to a controllable discretization and ensemble error. The exponentially decaying term comes from the fact that the distribution of $z^m_i$ decays to the target distribution exponentially fast, and the remainder term mostly comes from the distance between $\{x^m_i\}$ and $\{z^m_i\}$ systems.

\textcolor{black}{\begin{remark}\label{rem:mf} This theorem gives a clear guidance on the choice of some parameters. To have fast convergence and small error term, the parameters need to be tuned to have second term in \eqref{eqn:convergenceofCEnLMC} as small as possible. Assume we have enough particles ($N\to\infty$), we set this term to be smaller than $\epsilon$, then:
\[
\eta\leq \mathcal{O}\left(\frac{\epsilon}{\kappa d}\right),\ h\leq\mathcal{O}\left(\frac{\epsilon^2}{36\kappa^2d}\right),\ N^*>\mathcal{O}\left(\frac{36R^{d}_1\kappa d^2(M_f-f^*)}{\mu\eta^{d}\epsilon^2}\exp\left(\frac{R_2(R_2+R_1)}{h}\right)\right)\,.
\]
We then set the first term to be smaller than $\epsilon$ as well, then the lower bound for the needed number of iteration is:
\[
m>\mathcal{O}\left(\frac{\kappa^2 d}{\epsilon^2}\log\left(\frac{W_1(q^0,p)}{\epsilon}\right)\right)\,,
\]
meaning after these many iterations, $W_1(q^m_i,p)\leq 2\epsilon$, where $q^m_i$ is the distribution of $x^m_i$.
\\
Note that this gives the control of $\eta$, $h$ and $N^\ast$ but still leaves the freedom to adjust $R_1$, $R_2$ and $M_f$. These parameters should be determined by the percentage of gradient that we are willing to calculate. The discussion is found in Remark~\ref{rem:R}.
\end{remark}}

\subsubsection{Numerical saving of CEnLMC}\label{sec:saving}
We now discuss the numerical saving of CEnLMC compared with the classical LMC.

The main reason to utilize the ensemble gradient approximation is to avoid the gradient computation. In the algorithm, the ensemble approximation is enacted only if:
$$
\sqrt{2h}|\xi^{m-1}_i|\leq R_1\,,\quad \textcolor{black}{f(x^m_i)\leq M_f}\,,\quad N^m_i\geq N^\ast\,,
$$
where the size of $N^m_i$ depends on the number of samples who satisfy $|\delta w^m_{ij}|\leq R_2$. Therefore the probability of not using the ensemble approximation (but using $\nabla f$) can be bounded by:

\begin{equation}\label{eqn:prob_bound}
\begin{aligned}
\mathbb{P}\left(\left\{F^m_i = \nabla f(x^m_i)\right\}\right)\leq &\mathbb{P}\left(\left\{\sqrt{2h}|\xi^m_i|>R_1\right\}\right)\\
&+\mathbb{P}\left(\left\{|f(x^m_i)-f^\ast|>\textcolor{black}{(M_f-f^*)}\right\}\right)\\
&+\mathbb{P}\left(\left\{N^m_i<N^\ast\right\}\right)\,.
\end{aligned}
\end{equation}

One thus needs to choose the parameters wisely to make such a probability as small as possible so that most gradients in LMC get replaced by the ensemble approximation. More specifically, we have the following theorem:
\begin{theorem}\label{thm:ratioofcalculation}
Under the same assumption as in Theorem~\ref{thm:convergenceofCEnLMC_z} and let $f$ be $\mu$-convex. If $\mathrm{KL}(q_0|p)<\infty$, then for fixed $M\geq0$, we have:
\begin{align}
&\lim_{\eta\rightarrow0}\lim_{N\rightarrow\infty}\sup_{0\leq m\leq M,1\leq i\leq N}\mathbb{P}\left(\left\{\sqrt{2h}|\xi^{m-1}_i|>R_1\right\}\right)\leq C_d(R_1)\label{boundofP1}\,,\\
&\lim_{\eta\rightarrow0}\lim_{N\rightarrow\infty}\sup_{0\leq m\leq M,1\leq i\leq N}\mathbb{P}\left(\left\{|f(x^m_i)-f^\ast|>\textcolor{black}{(M_f-f^*)}\right\}\right)\leq\frac{2\kappa d}{\textcolor{black}{(M_f-f^*)}}\,,\label{boundofP2}\\
 &\lim_{\eta\rightarrow0}\lim_{N\rightarrow\infty}\sup_{0\leq m\leq M,1\leq i\leq N}\mathbb{P}\left(\left\{N^m_i<N^\ast\right\}\right)=0\,.\label{boundofP3}
\end{align}
where
\[
C_d(R_1)=\frac{S_d}{(2\pi)^{d/2}}\int^\infty_{\frac{R_1\sqrt{d}}{\sqrt{2}}}r^{d-1}\exp\left(-\frac{r^2}{2}\right)\rd r
\]
diminishes to $0$ for large $R_1$ and $S_d$ is the volume of unit $d$-sphere.
\end{theorem}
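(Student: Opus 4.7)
The plan is to prove the three bounds \eqref{boundofP1}--\eqref{boundofP3} separately, since each concerns a distinct mechanism by which CEnLMC falls back to the true gradient. Bound \eqref{boundofP1} is a direct Gaussian tail estimate and does not even require the double limit: since $\xi^{m-1}_i\sim\mathcal{N}(0,I_d)$ is independent of $m,i$, and $h\leq 1/d$ gives $R_1/\sqrt{2h}\geq R_1\sqrt{d/2}$, we have $\mathbb{P}(\sqrt{2h}|\xi^{m-1}_i|>R_1)\leq \mathbb{P}(|\xi^{m-1}_i|>R_1\sqrt{d/2})$, and rewriting the right-hand side as an integral in spherical coordinates on $\mathbb{R}^d$ with surface measure $S_d$ yields exactly $C_d(R_1)$.

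For \eqref{boundofP2}, Markov's inequality reduces the task to showing $\mathbb{E}(f(x^m_i)-f^\ast)\leq 2\kappa d$ in the double limit. I would couple $x^m_i$ to the classical LMC iterate $z^m_i$ from \eqref{eqn:lmc} via Theorem~\ref{thm:convergenceofCEnLMC_z}, choosing parameters in the regime of Corollary~\ref{cor:disxznonconvex} so that $\mathbb{E}|x^m_i-z^m_i|\to 0$ as $N\to\infty$ and then $\eta\to 0$. Combined with $L$-smoothness, namely $|f(x^m_i)-f(z^m_i)|\leq (|\nabla f(z^m_i)|+\frac{L}{2}|x^m_i-z^m_i|)|x^m_i-z^m_i|$, this transfers the estimate to $\mathbb{E}(f(z^m_i)-f^\ast)$. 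The latter is bounded by $\frac{L}{2}\mathbb{E}|z^m_i-x^\ast|^2$ using $L$-smoothness, and then by $\kappa d/2$ (up to vanishing initial-condition corrections) via $W_2$-contraction of LMC, the stationary bound $\mathbb{E}_p|x-x^\ast|^2\leq d/\mu$ for $\mu$-strongly log-concave $p$, and the finite-KL assumption on $q_0$ which yields a finite initial second moment.

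For \eqref{boundofP3}, I would use exchangeability of $(w^m_j)_{j=1}^N$, which holds at every step because the particles start i.i.d.\ and the update rule is symmetric in particle indices, to write $\mathbb{E}N^m_i=(N-1)\mathbb{P}(|w^m_1-w^m_2|\leq R_2)$. The Gaussian noise injected at each step renders the joint law of $(w^m_1,w^m_2)$ absolutely continuous, and because $R_2\geq 1$ this joint distribution assigns positive mass to $\{|w^m_1-w^m_2|\leq R_2\}$; this lower bound $c_m>0$ is independent of $N$, so $\mathbb{E}N^m_i$ grows linearly in $N$. A second-moment / Chebyshev argument then yields $\mathbb{P}(N^m_i<N^\ast)\to 0$ as $N\to\infty$, uniformly over the finite range $0\leq m\leq M$; uniformity in $i$ is automatic from exchangeability, and the subsequent $\eta\to 0$ limit is trivial since nothing on the left of \eqref{boundofP3} depends on $\eta$ in a degenerate way.

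I expect the main obstacle to be the uniform-in-$m$ control of $\mathbb{E}(f(z^m_i)-f^\ast)$ in \eqref{boundofP2}: because $f$ is only $L$-smooth (not Lipschitz), the step from $\mathbb{E}|x^m_i-z^m_i|$ small to $\mathbb{E}|f(x^m_i)-f(z^m_i)|$ small requires moment bounds on $\nabla f(z^m_i)$ that must be shown not to deteriorate in $m$, and one must verify that the fixed parameters in the hypotheses of Theorem~\ref{thm:ratioofcalculation} actually lie in the regime where the coupling of Corollary~\ref{cor:disxznonconvex} closes in the double limit.
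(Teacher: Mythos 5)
Your treatment of \eqref{boundofP1} matches the paper's: a direct Gaussian tail bound using $h\leq 1/d$, no limits needed. The other two parts, however, have genuine gaps, and both trace back to the same missing ingredient. You propose to get $\mathbb{E}|x^m_i-z^m_i|\to 0$ ``in the regime of Corollary~\ref{cor:disxznonconvex},'' but in Theorem~\ref{thm:ratioofcalculation} the parameter $N^\ast$ is \emph{fixed} and the limits are taken only in $N$ and then $\eta$; for fixed $N^\ast$ the bound \eqref{eqn:thm:disxz} contains the factor $\eta^{-d/2}/\sqrt{N^\ast}$ and diverges as $\eta\to 0$, while the corollary only closes by letting $N^\ast$ scale like $\eta^{-d/(1-\rho)}$. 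The paper resolves this with a new iteration lemma (Lemma~\ref{lem:largeN2}) that replaces $1/\sqrt{N^\ast}$ by $\mathbb{E}(1/\sqrt{N^m_i})$ and shows the latter vanishes as $N\to\infty$; this is done by comparing the interacting neighbor count $N^m_i$ with the neighbor count $\widetilde{N}^m_i$ of the \emph{decoupled, i.i.d.} LMC particles $z^m_j$, whose common law has a density precisely because $\mathrm{KL}(q_0|p)<\infty$ (this is where that hypothesis enters --- your proposal never uses it). The transfer between the two counts costs a union bound of size $\epsilon_m N_z/R_2$, so the coupling error and the neighbor-count probability must be controlled \emph{simultaneously by induction on $m$}. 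Your exchangeability-plus-Chebyshev argument for \eqref{boundofP3} does not close on its own: the indicators $\mathbf{1}_{|\delta w^m_{ij}|\leq R_2}$ are built from interacting particles, so a second-moment bound requires exactly the kind of decorrelation (propagation of chaos) that the comparison with the i.i.d. $z$-system supplies and that you have not established.

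For \eqref{boundofP2} there is a second, more local problem. You reduce to $\mathbb{E}\bigl(f(x^m_i)-f^\ast\bigr)$ and transfer to $z^m_i$ via $|f(x)-f(z)|\leq\bigl(|\nabla f(z)|+\tfrac{L}{2}|x-z|\bigr)|x-z|$; bounding the expectation of this product needs second moments of $x^m_i-z^m_i$ (or at least a product moment with $\nabla f(z^m_i)$), whereas Theorem~\ref{thm:convergenceofCEnLMC_z} only provides first-moment control. You correctly flag this as the main obstacle, but the fix is not more moment bounds: the paper sidesteps the issue entirely by never taking expectations of $f$-differences. It applies the Polyak--{\L}ojasiewicz inequality $f(x)-f^\ast\leq\frac{1}{2\mu}|\nabla f(x)|^2$ to convert the event into one about $|\nabla f(x^m_i)|^2$, splits against the stationary coupled process $y^m_i\sim p$ via $|a|^2\leq 2|a-b|^2+2|b|^2$, bounds $\mathbb{P}(|\nabla f(y^m_i)|^2>\mu(M_f-f^\ast)/2)$ by Markov and $\mathbb{E}_p|\nabla f|^2\leq Ld$ (giving the constant $2\kappa d/(M_f-f^\ast)$), and bounds the cross term by Markov applied to $|x^m_i-y^m_i|$ using \emph{only} its first moment together with the Lipschitz continuity of $\nabla f$. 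I recommend restructuring parts two and three around the induction of Lemma~\ref{lem:largeN2} and the probability-level (rather than expectation-level) Markov argument.
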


We leave the proof of the theorem to Section \ref{pfthm:ratioofcalculation}.  This theorem gives the bound to ~\eqref{eqn:prob_bound}. According to the formula of~\eqref{boundofP1}-\eqref{boundofP3}, a direct corollary is the following:
\begin{corollary}\label{cor:ratio}
Under the same assumption as in Theorem~\ref{thm:ratioofcalculation}, for any $\epsilon>0$, there exists constants $R^*,F^*$ only depend on $\epsilon,d$ such that if
\[
R_1>R^*,\quad M_f>F^*\,,
\]
we have
\[
\lim_{\eta\rightarrow0}\lim_{N\rightarrow\infty}\sup_{0\leq m\leq M,1\leq i\leq N}\mathbb{P}\left(\left\{F^m_i = \nabla f(x^m_i)\right\}\right)\leq \epsilon\,.
\]
\end{corollary}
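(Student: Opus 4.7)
The plan is to observe that this is a direct consequence of Theorem~\ref{thm:ratioofcalculation} combined with the union bound \eqref{eqn:prob_bound}. I would first take $\limsup$ over the decomposition
\[
\mathbb{P}(F^m_i=\nabla f(x^m_i))\leq \mathbb{P}(\sqrt{2h}|\xi^{m-1}_i|>R_1)+\mathbb{P}(|f(x^m_i)-f^\ast|>M_f-f^*)+\mathbb{P}(N^m_i<N^\ast),
\]
and then apply \eqref{boundofP1}--\eqref{boundofP3} term by term, yielding
\[
\lim_{\eta\to 0}\lim_{N\to\infty}\sup_{m,i}\mathbb{P}(F^m_i=\nabla f(x^m_i))\leq C_d(R_1)+\frac{2\kappa d}{M_f-f^*}+0.
\]

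Next I would choose the thresholds to make each surviving term at most $\epsilon/2$. For the first term, since
\[
C_d(R_1)=\frac{S_d}{(2\pi)^{d/2}}\int^\infty_{R_1\sqrt{d/2}}r^{d-1}e^{-r^2/2}\rd r
\]
is the tail of an integrable function (with a Gaussian factor dominating any polynomial prefactor), monotone convergence gives $C_d(R_1)\to 0$ as $R_1\to\infty$. Hence there exists $R^*=R^*(\epsilon,d)$ such that $R_1>R^*$ implies $C_d(R_1)<\epsilon/2$. For the second term, I would simply solve the inequality $2\kappa d/(M_f-f^*)<\epsilon/2$, which holds as soon as $M_f>f^*+4\kappa d/\epsilon=:F^*$. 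Note $F^*$ depends on $\epsilon$ and $d$ (via $\kappa$ and $f^*$, which are fixed data of the problem).

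Combining these two choices gives the claimed bound $\epsilon$, and the corollary follows. I do not anticipate any technical obstacle here, since all the quantitative work has already been absorbed into Theorem~\ref{thm:ratioofcalculation}; the only point that requires a line of justification is the decay $C_d(R_1)\to 0$, which is immediate from the Gaussian tail estimate (the integrand is $r^{d-1}e^{-r^2/2}$, integrable on $[0,\infty)$, so its tail vanishes). The resulting constants $R^*$ and $F^*$ depend only on $\epsilon$ and $d$ (together with the fixed problem data $\mu$, $L$, $f^*$ that enter $\kappa$ and the normalization), as asserted.
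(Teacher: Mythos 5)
Your proposal is correct and matches the paper's intended argument: the paper presents Corollary~\ref{cor:ratio} as a direct consequence of the union bound~\eqref{eqn:prob_bound} together with the three limits~\eqref{boundofP1}--\eqref{boundofP3}, choosing $R_1$ large enough that $C_d(R_1)$ is small and $M_f$ large enough that $2\kappa d/(M_f-f^*)$ is small, exactly as you do. The only detail worth flagging is that, as you yourself note, $F^*$ depends on the fixed problem data $\kappa$ and $f^*$ in addition to $\epsilon$ and $d$, which is consistent with (if slightly looser than) the paper's phrasing.
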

According to the Corollary \ref{cor:ratio}, when we have enough particles, we can always tune the parameters so that most gradients in LMC get replaced by the ensemble approximation.

\textcolor{black}{\begin{remark}\label{rem:R}
This theorem gives the guideline for the parameter choice of $R_1$, $R_2$ and $M_f$. Suppose the percentage of the gradient we would like to compute is $\alpha$, and we equally distribute it to the three terms in (43). Then in the limit of $\eta\to0$ and $N\to\infty$, $R_1$ should be chosen, according to (44), so that
\[
C_d(R_1) \leq \frac{\alpha}{3}.
\]
Similarly, according to (45), $M_f$ should be chosen so that
\[
M_f\geq \frac{6\kappa d}{\alpha}+f^*\,.
\]
\\
Lastly, we need to give a bound for $R_2$. This can be implicitly computed from (46). While it is true that in the $N\to\infty$ limit, the probability is necessarily $<\frac{\alpha}{3}$, for every fixed $N$, the size of $R_2$ will affect the probability. Such dependence is very delicate, and we only give a rough bound. Suppose we are in the ideal case with $h\to0$ so that $x_i^m=w^m_i$, and suppose we have iterated many times and the particles are approximately close to i.i.d. sampled from the target distribution. then
\[
\begin{aligned}
\mathbb{P}\left(\left\{N^m_i<N^\ast\right\}\right)&= \mathbb{P}\left(\#\left\{w^m_j\middle||w^m_j-w^m_i|<R_2,\ j=1,2,\dots,N\right\}<N^*+1\middle|w^m_i\right)\\
&\approx \mathbb{P}\left(\#\left\{x^m_j\middle||x^m_j-x^m_i|<R_2,\ j=1,2,\dots,N\right\}<N^*+1\middle|x^m_i\right)\\
&= \sum_{k=0}^{N^*-1} {N-1 \choose k}p^k(R_2)(1-p(R_2))^{N-1-k}\ll O(1)
\end{aligned} 
\]
where $p(R_2)=\mathbb{P}_{y,z\sim p}(|y-z|<R_2)$. The first equation comes from the definition, and the second is driven by the fact that $x_i^m$ and $w_i^m$ are close by. Assuming $N^*<\frac{N+1}{2}, p(R_2)<\frac{1}{4}$, then
\[
\begin{aligned}
\mathbb{P}\left(\left\{N^m_i<N^\ast\right\}\right)&\approx \sum_{k=0}^{N^*-1} {N-1 \choose k}p^k(R_2)(1-p(R_2))^{N-1-k}\\
&\leq (1-p(R_2))^{N-1}{N-1 \choose N^*-1}\sum_{k=0}^{N^*-1}\left(\frac{p(R_2)}{1-p(R_2)}\right)^k\\
&\leq {N-1 \choose N^*-1}\frac{(1-p(R_2))^{N}}{1-2p(R_2)}\\
&\leq CN^{N^*}(1-p(R_2))^{N}
\end{aligned} 
\]
where $C$ is a uniform constant and we use Stirling's approximation in the last inequality. To have this term controlled by $\frac{\alpha}{3}$, we need to choose $p(R_2)$ so that:
\[
1-\left(\frac{\alpha}{3CN^{N^*}}\right)^{1/N}\leq p(R_2)\leq \frac{1}{4}\,,
\]
which permits:
\[
\mathbb{P}\left(\left\{N^m_i<N^\ast\right\}\right)\approx \sum_{k=0}^{N^*-1} {N-1 \choose k}p^k(R_2)(1-p(R_2))^{N-1-k}\leq \frac{\alpha}{3}\,.
\]
\end{remark}}

\section{Numerical experiment}\label{sec:numerics}
\textcolor{black}{We show two numerical examples to demonstrate the two main themes of the paper: the samples capture the target distribution, and the number of gradient calculations is significantly reduced. In particular, for both examples, we define the percentage of the gradient calculations:
\[
\mathcal{R}_m=\frac{\#\{F^j_i=\nabla f(x^j_i)|1\leq i\leq N, 1\leq j\leq m\}}{mN}\,,
\]
and we will show the evolution of this percentage in iterations. To demonstrate the accuracy, we also show the samples generated from LMC~\cite{roberts1996} and MALA (Metropolis-adjusted Langevin algorithm)~\cite{LDMHA2002,doi:10.1137/19M1284014}.}

\textcolor{black}{
\textbf{Example 1.} In this example, we set $d=2$, and the target distribution $p(x)\propto \exp(-|x_1|^2/2-|x_2|^2/8)$. Suppose the initial distribution is:
\[
q^0(x)\propto \exp\left(-\frac{(x_1-1)^2}{2}-\frac{(x_2-1)^2}{2}\right)+\exp\left(-\frac{(x_1+1)^2}{2}-\frac{(x_2+1)^2}{2}\right)\,.
\]
In the experiment, we choose $R_1=\frac{3\sqrt{5}}{10}$, $h=\eta=0.1$, $R_2=1.5$, $M_f=20$, and $N^\ast=10^3$. In Figure~\ref{Figure1}-\ref{Figure2}, we plot the samples generated by CEnLMC, LMC, and MALA at different iterations, using $N=10^4$. Since the example is logconcave in nature, the samples converge fairly quickly. Furthermore, we plot the ratio $\mathcal{R}_m$ at different iteration, using $N=2\times10^3, 6\times10^3, 10^4$, in Figure~\ref{Figure3}. While in the case of $N=2\times 10^3$, most particles need to have its gradient computed in every iteration, the ratio drops significantly for the larger $N$, and as iteration $m$ increases, the percentage of gradient calculation continues to decrease. This saving verifies the prediction from Section \ref{sec:saving}.
\begin{figure}[!ht]
     \centering
     \includegraphics[width=1\textwidth,height=0.4\textheight]{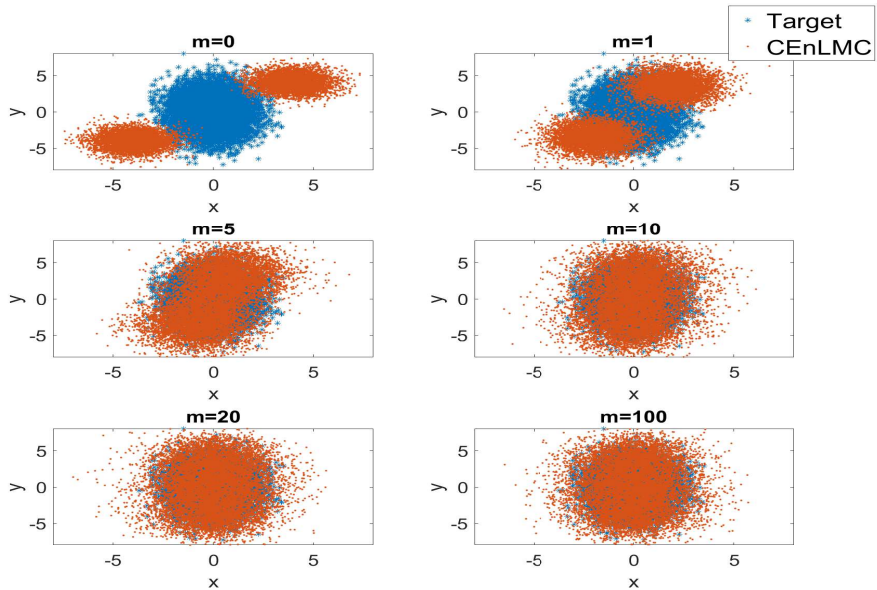}
     \caption{Example $1$: Evolution of samples using CEnLMC. $N=10^4$.}
     \label{Figure1}
\end{figure}
\begin{figure}[!ht]
     \centering
     \includegraphics[width=1\textwidth,height=0.4\textheight]{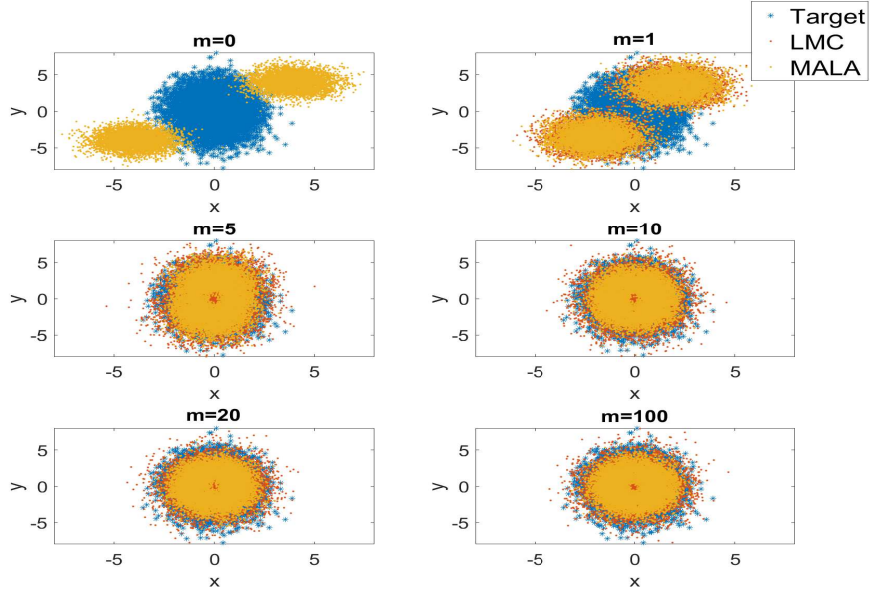}
     \caption{Example $1$: Evolution of samples using LMC and MALA. $N=10^4$.}
     \label{Figure2}
\end{figure}
\begin{figure}[!ht]
     \centering
     \includegraphics[width=1\textwidth,height=0.4\textheight]{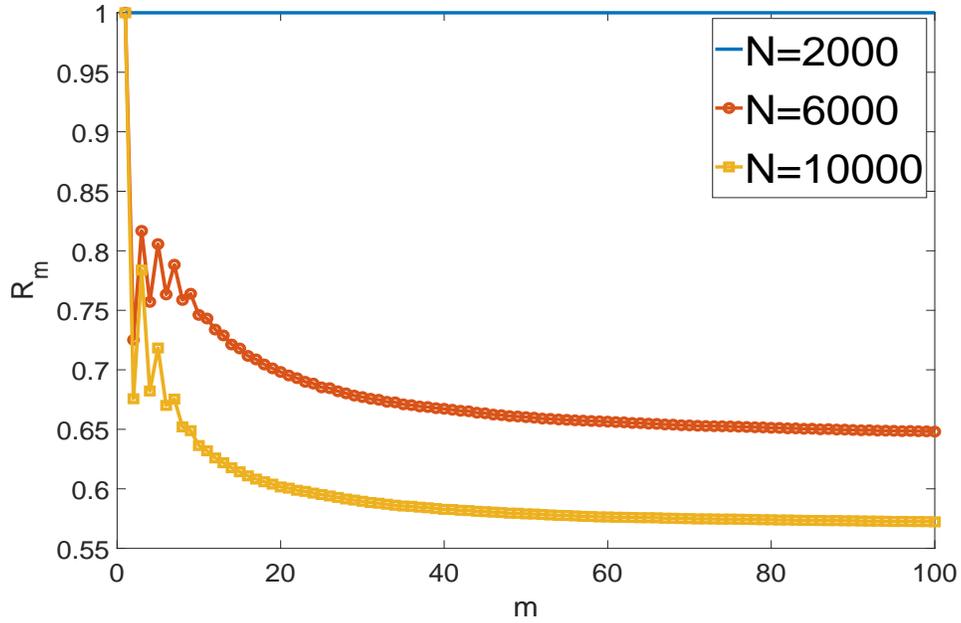}
     \caption{Example $1$: Evolution of $\mathcal{R}_m$ when $N=2\times10^3, 6\times10^3$ or $10^4$.}
     \label{Figure3}
\end{figure}}

\textcolor{black}{
\textbf{Example 2.} In this example, we test the algorithms on a target distribution that is not logconcave. Set the target to be
\[
p(x)\propto \exp\left(-\frac{(x_1-4)^2}{2}-\frac{x_2^2}{2}\right)+\exp\left(-\frac{(x_1+4)^2}{2}-\frac{x_2^2}{2}\right)\,,
\]
and the initial to be $q^0(x)\propto \exp(-|x_1|^2/2-|x_2|^2/2)$. In the experiment, we choose $R_1=\frac{3\sqrt{5}}{10}$, $h=\eta=0.1$, $R_2=1.5$, $M_f=20$, and $N^\ast=10^3$. In Figure~\ref{Figure4}-\ref{Figure5}, we plot the samples generated by CEnLMC, LMC, and MALA at different iterations, using $N=10^4$. Since the example is not logconcave anymore, the convergence rate of the samples is slower. We also plot the ratio $\mathcal{R}_m$ at different iteration, using $N=2\times10^3, 6\times10^3$ and $10^4$ respectively, in Figure~\ref{Figure6}. While in the case of $N=2\times 10^3$, most particles need to have its gradient computed in every iteration, the ratio drops significantly for the larger $N$.
\begin{figure}[!ht]
     \centering
     \includegraphics[width=1\textwidth,height=0.4\textheight]{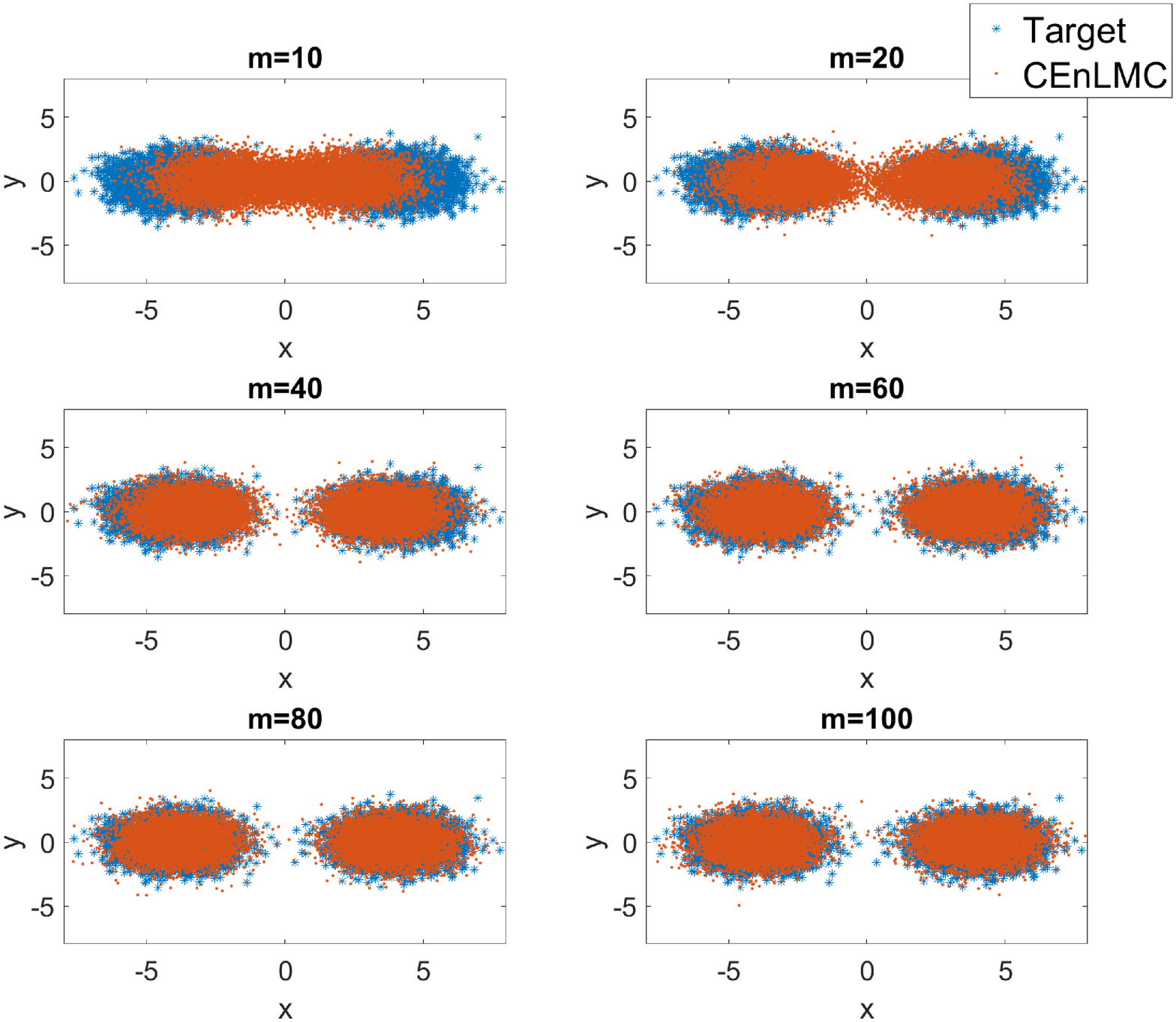}
     \caption{Example $2$: Evolution of samples using CEnLMC when $N=10^4$}
     \label{Figure4}
\end{figure}
\begin{figure}[!ht]
     \centering
     \includegraphics[width=1\textwidth,height=0.4\textheight]{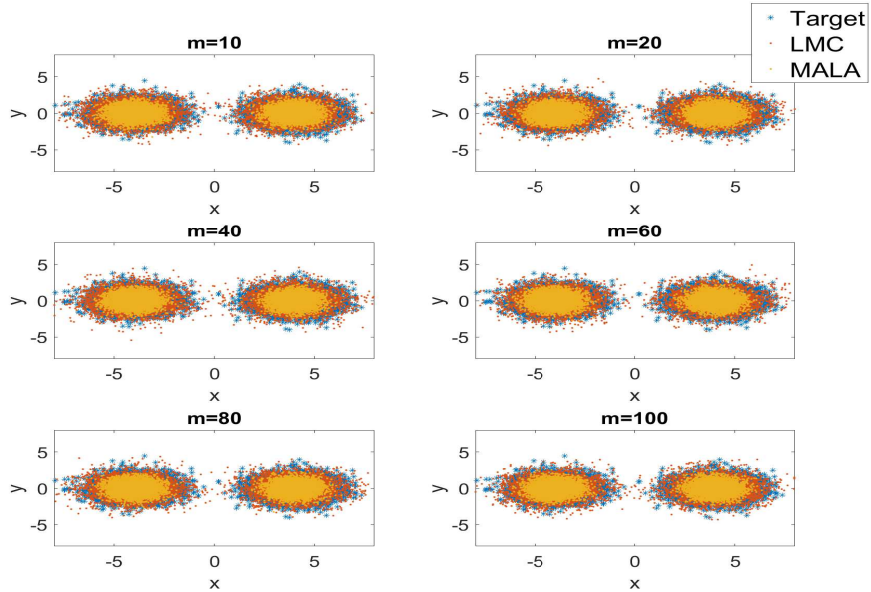}
     \caption{Example $2$: Evolution of samples using LMC and MALA when $N=10^4$}
     \label{Figure5}
\end{figure}
\begin{figure}[!ht]
     \centering
     \includegraphics[width=1\textwidth,height=0.4\textheight]{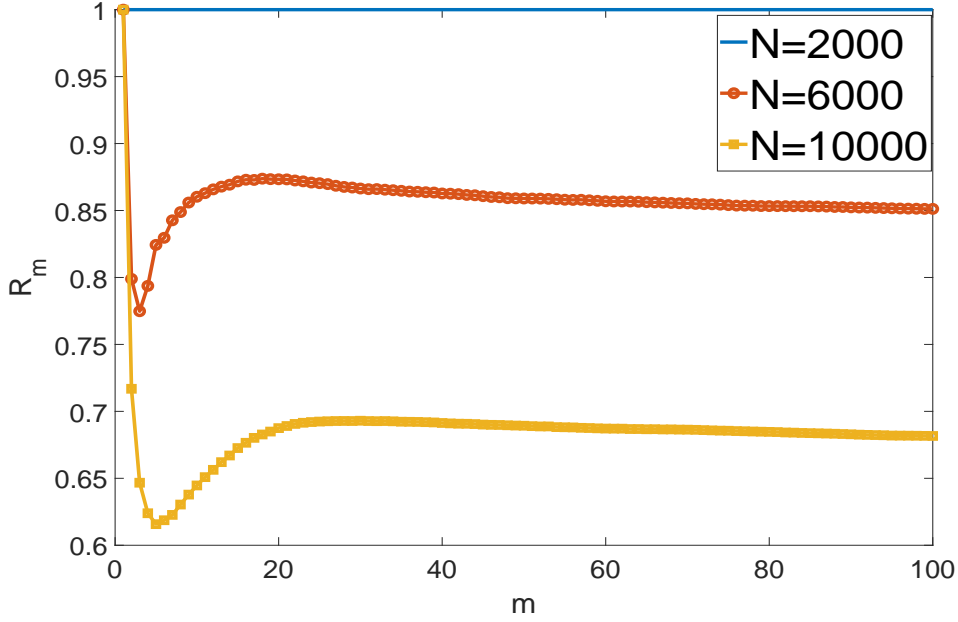}
     \caption{Example $2$: Evolution of $\mathcal{R}_m$ with $m$ when $N=2\times10^3, 6\times10^3, 10^4$}
     \label{Figure6}
\end{figure}
}

\section{Proof of theoretical results}\label{sec:proof}
\subsection{Proof of Theorem \ref{thm:blowupvariance}}\label{sec:proof_ENLMC}
In this section, we prove Theorem \ref{thm:blowupvariance}. According to algorithm \ref{alg:EnOLMC}, we have
\[
x^{m}_i=x^{m-1}_i-hF^{m-1}_i+\sqrt{2h}\xi^{m-1}_i
\]
and $\{\xi^{m-1}_i\}^N_{i=1}$ are i.i.d. independent. Under filtration $\mathcal{F}^{m-1}$, then the conditional distribution of $\left\{x^{m}_i\right\}^N_{i=1}$ is independent.

To prove the theorem, we need the following proposition:
\begin{proposition}\label{prop:blowupvariance}
Assume $\left\{x^m_i\right\}^N_{i=1}$ are generated from Algorithm \ref{alg:EnOLMC} with $F^m$ defined as \eqref{eqn:secondapproximation}, then for $f(x) = x^2/2$, we have: for any $m>0$, $1\leq i\leq N$
\begin{equation}\label{varianceinfinity}
\mathbb{E}\left(\left|E^m_i\right|^2\right)=\mathbb{E}\left|F^m_i-\nabla f(x^m_i)\right|^2=\infty\,.
\end{equation}
\end{proposition}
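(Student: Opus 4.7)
The plan is to show that the conditional second moment of a single ensemble term $F^m_{ij}$ diverges almost surely under $\mathcal{F}^{m-1}$, and then transfer this divergence to the full error $E^m_i = F^m_i - \nabla f(x^m_i)$ via the conditional independence of $\{F^m_{ij}\}_{j\neq i}$ given $(\mathcal{F}^{m-1}, x^m_i)$. The core mechanism, already identified in~\eqref{eqn:var_F_comp2}, is that the denominator $\mathrm{p}^m_j = p^m_j(x^m_j)$ in~\eqref{eqn:F^m_ij_formula}, once squared and integrated against $p^m_j$, leaves a surviving factor $1/p^m_j\propto\exp(|x^m_j-w^m_j|^2/(4h))$ that overwhelms the Gaussian tail.

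First I would fix the conditional structure. Under $\mathcal{F}^{m-1}$, the samples $\{x^m_k\}_{k=1}^N$ are independent with $x^m_k\sim\mathcal{N}(w^m_k,2h)$, where $w^m_k = x^{m-1}_k - hF^{m-1}_k$ is $\mathcal{F}^{m-1}$-measurable. For $f(x)=x^2/2$, a direct integration (the factor $1/p^m_j$ inside $F^m_{ij}$ cancels the density $p^m_j\,dx^m_j$, reducing the conditional mean to an integral of $(x^m_j+x^m_i)/(2\eta)$ over a symmetric interval around $x^m_i$) yields
\[
\mathbb{E}[F^m_{ij}\mid \mathcal{F}^{m-1},x^m_i] = \nabla f(x^m_i),
\]
so the ensemble approximation is \emph{exactly} unbiased in the quadratic case and
\[
\mathbb{E}\bigl(|E^m_i|^2\mid \mathcal{F}^{m-1},x^m_i\bigr) = \frac{1}{N-1}\,\mathrm{Var}\bigl(F^m_{ij}\mid \mathcal{F}^{m-1},x^m_i\bigr)
\]
by the conditional independence. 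It therefore suffices to prove that this single-term conditional variance is $+\infty$ on a set of positive probability.

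Second I would execute the explicit computation of $\mathbb{E}(|F^m_{ij}|^2 \mid \mathcal{F}^{m-1})$ by substituting~\eqref{eqn:F^m_ij_formula} and~\eqref{eqn:x^m_ij_formula}, following the change of variables $z = x^m_j - x^m_i$ displayed in~\eqref{eqn:var_F_comp2}. After separating the $z$- and $x^m_i$-integrations, the inner integral has the form
\[
\int_{\mathbb{R}} \frac{(2x^m_i + z)^2}{4\eta^2}\,\exp\!\left(\frac{x^m_i(w^m_i + z - w^m_j)}{2h}\right)dx^m_i,
\]
a polynomial times an exponential with coefficient $c(z) := (w^m_i + z - w^m_j)/(2h)$. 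Whenever $c(z)\neq 0$ this integral equals $+\infty$, so it remains only to rule out $c(z)\equiv 0$. Since the $\mathcal{F}^{m-1}$-conditional law of $w^m_i - w^m_j$ inherits absolute continuity from the Gaussian noises $\xi^{m'}_k$, $m'<m-1$, the zero set $\{z \in (-\eta,\eta) : c(z) = 0\}$ has Lebesgue measure zero almost surely. The nonnegative outer integrand over $z$ then forces $\mathbb{E}(|F^m_{ij}|^2\mid\mathcal{F}^{m-1}) = +\infty$ a.s., and taking the full expectation yields~\eqref{varianceinfinity}.

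The main obstacle I foresee is the absolute-continuity claim for $w^m_i - w^m_j$: the force $F^{m-1}_k$ depends in a complicated (and potentially singular) way on the whole past, so care is needed to verify that the smoothing from $\sqrt{2h}\,\xi^{m-2}_k$ is enough to make the joint law of $(w^m_i, w^m_j)$ non-atomic. A short induction on $m$, using independence of the fresh Gaussian increment from the past $\sigma$-algebra, should suffice; the base case $m=1$ follows immediately from $\xi^0$ alone. Everything else reduces to standard Gaussian manipulations plus Tonelli on nonnegative integrands.
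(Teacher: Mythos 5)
Your proposal is correct and follows essentially the same route as the paper's own proof: exact unbiasedness of $F^m_{ij}$ for quadratic $f$, conditional independence given $(\mathcal{F}^{m-1},x^m_i)$ to eliminate the cross terms, and the explicit Gaussian computation of~\eqref{eqn:var_F_comp2} showing that the single-pair conditional second moment diverges. The only remark is that the ``main obstacle'' you foresee is not actually an obstacle: even on the set where $c(z)=(w^m_i+z-w^m_j)/(2h)=0$, the inner integral $\int_{\mathbb{R}}\frac{(2x+z)^2}{4\eta^2}\,dx$ already diverges (a non-integrable polynomial over all of $\mathbb{R}$), so no absolute-continuity or induction argument on the law of $w^m_i-w^m_j$ is needed.
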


\begin{proof}[Proof of Proposition \ref{prop:blowupvariance}]
Since $f(x)=|x|^2/2$, we can obtain, according to~\eqref{eqn:F^m_ij_formula}:
\[
\begin{aligned}
F^m_{i,j}=&\frac{1}{\eta}\frac{(x^{m}_j+x^{m}_i)(x^m_j-x^m_i)}{2|x^m_j-x^m_i|^2}\frac{\mathbf{1}_{|\delta x^m_{ij}|<\eta}}{p^{m}_j(x^m_j)}(x^m_j-x^m_i)\\
=&\frac{x^m_j-x^m_i}{2\eta}\frac{\mathbf{1}_{|\delta x^m_{ij}|<\eta}}{p^{m}_j(x^m_j)}+\frac{x^m_i}{\eta}\frac{\mathbf{1}_{|\delta x^m_{ij}|<\eta}}{p^{m}_j(x^m_j)}\,.
\end{aligned}
\]

The two terms carry different information:
\begin{itemize}
\item The conditional expectation of first term equals zero:
\[
\begin{aligned}
&\mathbb{E}\left(\frac{x^m_j-x^m_i}{2\eta}\frac{\mathbf{1}_{|\delta x^m_{ij}|<\eta}}{p^{m}_j(x^m_j)}\middle|\mathcal{F}^{m-1}\right)\\
=&\frac{1}{2\eta}\int\int_{|x^m_j-x^m_i|<\eta} (x^m_j-x^m_i)p^{m}_i(x^m_i)\rd x^{m}_j\rd x^{m}_i=0\,.
\end{aligned}
\]

\item The second term is consistent with $\nabla f(x^m_i)=x^m_i$, meaning:
\[
\mathbb{E}\left(\frac{x^m_i}{\eta}\frac{\mathbf{1}_{|\delta x^m_{ij}|<\eta}}{p^{m}_j(x^m_i)}\middle|\mathcal{F}^{m-1},x^m_i\right)=x^m_i\int_{|x^m_j-x^m_i|<\eta} \frac{1}{\eta}\rd x^{m}_j=x^m_i\,,
\]
where we use $x^m_j$ and $x^m_i$ is conditional independent in the first equality.
\end{itemize}

These imply, for all $j\neq i$:
\begin{equation}\label{eqn:zero1}
\mathbb{E}\left(F^m_{i,j}-x^m_i\middle| \mathcal{F}^{m-1}\right)=0\,.
\end{equation}
Furthermore, since the conditional distribution of $x^m_{j_1},x^m_{j_2},x^m_i$ are independent, for $j_1\neq j_2$, $i\neq j_1$, and $i\neq j_2$:
\begin{equation}\label{eqn:zero2}
\begin{aligned}
&\mathbb{E}\left((F^m_{i,j_1}-x^m_i)(F^m_{i,j_2}-x^m_i)\middle| \mathcal{F}^{m-1}\right)\\
=&\mathbb{E}\left(\mathbb{E}\left((F^m_{i,j_1}-x^m_i)(F^m_{i,j_2}-x^m_i)\middle| \mathcal{F}^{m-1},x^m_i\right)\middle| \mathcal{F}^{m-1}\right)\\
=&\mathbb{E}\left(\mathbb{E}\left(F^m_{i,j_1}-x^m_i\middle| \mathcal{F}^{m-1},x^m_i\right)\mathbb{E}\left(F^m_{i,j_2}-x^m_i\middle| \mathcal{F}^{m-1},x^m_i\right)\middle| \mathcal{F}^{m-1}\right)\\
=&0
\end{aligned}
\end{equation}

Plug \eqref{eqn:zero1} and~\eqref{eqn:zero2} into $\mathbb{E}\left(\left|E^m_i\right|^2\middle|\mathcal{F}^{m-1}\right)=\mathbb{E}\left(\left|F^m_i-\nabla f(x^m_i)\right|^2\middle|\mathcal{F}^{m-1}\right)$, we have
\begin{equation}\label{eqn:expansionfirst}
\begin{aligned}
\mathbb{E}\left(\left|E^m_i\right|^2\middle|\mathcal{F}^{m-1}\right)=&\mathbb{E}\left(\left|F^m_i-\nabla f(x^m_i)\right|^2\middle|\mathcal{F}^{m-1}\right)\\
=&\frac{1}{(N-1)^2}\sum^N_{j\neq i}\mathbb{E}\left(\left|F^m_{i,j}-x^m_i\right|^2\middle|\mathcal{F}^{m-1}\right)\\
=&\frac{1}{(N-1)^2}\sum^N_{j\neq i}\mathbb{E}\left(\left|F^m_{i,j}\right|^2\middle|\mathcal{F}^{m-1}\right)-\frac{1}{N-1}\mathbb{E}\left(\left|x^{m}_i\right|^2\middle|\mathcal{F}^{m-1}\right)\,,
\end{aligned}
\end{equation}
where we use \eqref{eqn:zero2} in the second equality.
Noting that in~\eqref{eqn:var_F_comp2} we already showed:
\[
\mathbb{E}\left(\left|F^m_{i,j}\right|^2\middle|\mathcal{F}^{m-1}\right)=\infty\,,
\]
and that the second term in~\eqref{eqn:expansionfirst} is finite:
\[
\mathbb{E}\left(\left|x^{m}_i\right|^2\middle|\mathcal{F}^{m-1}\right)=\left|x^{m-1}_i-hF^{m-1}_i\right|^2+2h<\infty\,,
\] 
we obtain:
\[
\mathbb{E}\left(\left|F^m_i-\nabla f(x^m_i)\right|^2\middle|\mathcal{F}^{m-1}\right)=\infty\,,
\]
which proves \eqref{varianceinfinity}, concluding this proposition.
\end{proof}

Now, we are ready to prove Theorem \ref{thm:blowupvariance}.
\begin{proof}[Proof of Theorem \ref{thm:blowupvariance}]
For each $m\geq 0$ and $1\leq i\leq N$, we consider 
\[
x^{m+1}_i=x^{m}_i-h\nabla f(x^{m}_i)+\sqrt{2h}\xi^{m}_i+hE^m_i\,,
\]
where $E^{m}_i=\nabla f(x^{m}_i)-F^{m}_i$ denote the differentiation from the classical LMC formula. Using $x^m_j$ and $x^m_i$ are conditional independent for $i\neq j$, we obtain
\begin{equation}\label{eqn:zero3}
\begin{aligned}
&\mathbb{E}\left(E^{m}_i(x^{m}_i-h\nabla f(x^{m}_i)+\sqrt{2h}\xi^{m}_i)\middle|\mathcal{F}^{m-1}\right)\\
=&\mathbb{E}\left(E^{m}_i(x^{m}_i-h\nabla f(x^{m}_i))\middle|\mathcal{F}^{m-1}\right)\\
=&\mathbb{E}\left(\mathbb{E}\left(E^{m}_i(x^{m}_i-h\nabla f(x^{m}_i))\middle|\mathcal{F}^{m-1},x^m_i\right)\middle|\mathcal{F}^{m-1}\right)\\
=&\mathbb{E}\left(\left(\frac{1}{N-1}\sum^N_{j\neq i}\mathbb{E}\left(x^m_i-F^{m}_{i,j}\middle|\mathcal{F}^{m-1},x^m_i\right)\right)\left(x^{m}_i-h\nabla f(x^{m}_i)\right)\middle|\mathcal{F}^{m-1}\right)\\
=&\mathbb{E}\left(0\left(x^{m}_i-h\nabla f(x^{m}_i)\right)\middle|\mathcal{F}^{m-1}\right)=0\,,
\end{aligned}
\end{equation}
where we use $\mathbb{E}\left(\xi^{m}_i\middle|\mathcal{F}^{m-1}\right)=\mathbb{E}\left(\xi^{m}_i\right)=\vec{0}$ in the first equality and \eqref{eqn:zero1} in the second last equality.

Therefore, we have
\[
\begin{aligned}
&\mathbb{E}\left(|x^{m+1}_i|^2\middle|\mathcal{F}^{m-1}\right)\\
=&\mathbb{E}\left(\left|x^{m}_i-h\nabla f(x^{m}_i)+\sqrt{2h}\xi^{m}_i\right|^2\middle|\mathcal{F}^{m-1}\right)+\mathbb{E}\left(|E^{m}_i|^2\middle|\mathcal{F}^{m-1}\right)\\
\geq&\mathbb{E}\left(|E^{m}_i|^2\middle|\mathcal{F}^{m-1}\right)\,,
\end{aligned}
\]
where we use \eqref{eqn:zero3} in the first equality. Finally, using the previous proposition, we have
\[
\mathbb{E}\left(\mathbb{E}\left(|x^{m+1}_i|^2\middle|\mathcal{F}^{m-1}\right)\right)\geq \mathbb{E}\left(\mathbb{E}\left(|E^{m}_i|^2\middle|\mathcal{F}^{m-1}\right)\right)=\infty\,,
\]
which proves \eqref{eqn:blowupvarianceinfinity}.
\end{proof}

\subsection{Analysis of CEnLMC}\label{pfthCENLMC}
We now analyze Algorithm~\ref{alg:CEnOLMC}, the Constraint Ensemble LMC. The strategy is to compare the evolution of $x^m_i$ with $z^m_i$, the solution to the classical LMC~\eqref{eqn:lmc}, before utilizing the convergence of $z^m_i$ to find the convergence of $x^m_i$.

Theorem~\ref{thm:convergenceofCEnLMC_z} discusses the closeness of $x^m_i$ and $z^m_i$, while Theorem~\ref{thm:convergenceofCEnLMC} discusses the convergence of $x^m_i$. The following two subsections are dedicated to these two theorems respectively.

\subsubsection{Proof of Theorem~\ref{thm:convergenceofCEnLMC_z}}\label{pfthm:disxz}
To show the smallness of $x^m_i-z^m_i$, we first rewrite the updating formula for $x^m_i$,~\eqref{eqn:algcelmc}, into
\begin{equation}\label{eqn:algcelmcmodification}
x^{m+1}_i=x^m_i-\nabla f(x^m_i)h+E^m_ih+\sqrt{2h}\xi^m_i\,,
\end{equation}
where
\begin{equation}\label{eqn:Em}
E^m_i=\nabla f(x^m_i)-F^m_i\,.
\end{equation}

Comparing the updating formula of $z^m_i$ in equation~\eqref{eqn:lmc}, it is easy to see that the key lies in bounding the term $E^m_i$. This is shown in the following lemma.
\begin{lemma}\label{lem:E^m_i}
Under the same conditions of Theorem \ref{thm:convergenceofCEnLMC_z}, we have: for any $m\geq0$, $1\leq i\leq N$
\begin{equation}\label{eqn:prop:bound}
\mathbb{E}\left|E^m_i\right|\lesssim \sqrt{\frac{R^{d}_1L\textcolor{black}{(M_f-f^*)}d^2}{\eta^{d}N^\ast}}\exp\left(\frac{R_2(R_2+R_1)}{2h}\right)+L\eta d\,.
\end{equation}
\end{lemma}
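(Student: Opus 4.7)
The plan is to split $E^m_i = \nabla f(x^m_i) - F^m_i$ into a ``bias'' piece (from the finite-difference surrogate) and a ``variance'' piece (from Monte Carlo averaging over the restricted neighbor set), and bound each. First, reduce to the event on which the ensemble is actually used. Define
\[
A^m_i := \{\sqrt{2h}|\xi^{m-1}_i|\leq R_1\}\cap\{f(x^m_i)\leq M_f\}\cap\{N^m_i\geq N^\ast\}\,.
\]
On $(A^m_i)^c$ the algorithm sets $F^m_i=\nabla f(x^m_i)$, so $E^m_i=0$ and the claim is trivial there. Hence it suffices to bound $\EE[|E^m_i|\mathbf{1}_{A^m_i}]$.

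On $A^m_i$, introduce the intermediate quantity built from the true-gradient version $G^m_{ij}$ in~\eqref{eqn:d_tilde_ij}:
\[
\widetilde G^m_i := \frac{1}{N^m_i}\sum_{j\neq i}G^m_{ij}\mathbf{1}_{|\delta w^m_{ij}|\leq R_2}\,,
\]
and write $E^m_i=(\nabla f(x^m_i)-\widetilde G^m_i)+(\widetilde G^m_i-F^m_i)$. The second piece is a termwise comparison of the finite differences against the directional derivatives, and the bound~\eqref{eqn:error_GF} already gives $\EE(|G^m_{ij}-F^m_{ij}|\mid \mathcal{F}^{m-1},x^m_i)\leq L\eta d$, hence the same bound for the average. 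This yields the $L\eta d$ contribution in~\eqref{eqn:prop:bound}.

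The first piece is the main work. The crucial observation is that, conditional on $\mathcal{F}^{m-1}$, the set $S^m_i:=\{j\neq i:|\delta w^m_{ij}|\leq R_2\}$ is deterministic because each $w^m_j$ is $\mathcal{F}^{m-1}$-measurable, $N^m_i=|S^m_i|$ is deterministic, and the samples $\{x^m_j\}_{j\neq i}$ are independent since the Gaussians $\xi^{m-1}_j$ are i.i.d. By~\eqref{eqn:force_i_m}, each $G^m_{ij}$ with $j\neq i$ is conditionally unbiased for $\nabla f(x^m_i)$, so $\EE(\widetilde G^m_i\mid\mathcal{F}^{m-1},x^m_i)=\nabla f(x^m_i)$ and
\[
\EE\bigl(|\widetilde G^m_i-\nabla f(x^m_i)|^2\,\big|\,\mathcal{F}^{m-1},x^m_i\bigr)\leq \frac{1}{N^m_i}\max_{j\in S^m_i}\EE\bigl(|G^m_{ij}|^2\,\big|\,\mathcal{F}^{m-1},x^m_i\bigr)\,,
\]
and on $A^m_i$ we may replace $N^m_i$ by $N^\ast$. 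To estimate $\EE(|G^m_{ij}|^2\mid\mathcal{F}^{m-1},x^m_i)$, bound $|G^m_{ij}|^2\leq\alpha_d^2|\nabla f(x^m_i)|^2\mathbf{1}_{|x^m_j-x^m_i|\leq\eta}/(\mathrm{p}^m_j)^2$ and integrate against $p^m_j$, leaving $\alpha_d^2|\nabla f(x^m_i)|^2\int_{B_\eta(x^m_i)}p^m_j(y)^{-1}dy$. Two inputs close the estimate:
\begin{itemize}
\item $L$-smoothness of $f$ combined with $f(x^m_i)\leq M_f$ gives $|\nabla f(x^m_i)|^2\leq 2L(M_f-f^\ast)$ via the standard inequality $|\nabla f|^2\leq 2L(f-f^\ast)$.
\item By~\eqref{eqn:p_j_x_j1} and the triangle inequality on $A^m_i\cap\{j\in S^m_i\}$, for $y\in B_\eta(x^m_i)$ one has $|y-w^m_j|\leq|y-x^m_i|+|x^m_i-w^m_i|+|w^m_i-w^m_j|\leq \eta+R_1+R_2$, hence $p^m_j(y)^{-1}\leq (4\pi h)^{d/2}\exp((\eta+R_1+R_2)^2/(4h))$, giving an integral bound involving $(S_d\eta^d/d)(4\pi h)^{d/2}\exp((\eta+R_1+R_2)^2/(4h))$.
\end{itemize}
Substituting $\alpha_d^2=d^4/(S_d^2\eta^{2d})$ and taking square roots combines with the $L\eta d$ finite-difference term to produce~\eqref{eqn:prop:bound}.

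The main obstacle is the careful bookkeeping under the three simultaneous constraints: the indicator $\mathbf{1}_{|\delta w^m_{ij}|\leq R_2}$ must be kept $\mathcal{F}^{m-1}$-measurable to preserve the conditional unbiasedness of $\widetilde G^m_i$, and the three bounds $|x^m_i-w^m_i|\leq R_1$, $|w^m_i-w^m_j|\leq R_2$, $|y-x^m_i|\leq\eta$ must be organized so that $|y-w^m_j|^2/(4h)$ yields the claimed $R_2(R_2+R_1)/(2h)$ factor in the exponential; invoking $\eta\leq R_2$ from the hypotheses and absorbing the lower-order constants into the implied $\mathcal{O}$ completes the match.
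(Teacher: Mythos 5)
Your decomposition is essentially the paper's own: split $E^m_i$ into the finite-difference bias (controlled termwise by \eqref{eqn:error_GF}, giving the $L\eta d$ term) and the fluctuation of the conditionally unbiased $G^m_{ij}$'s; use the $\mathcal{F}^{m-1}$-measurability of the $w^m_j$'s and the conditional independence of the $x^m_j$'s to kill the cross terms and gain the $1/N^\ast$; then bound the single-term second moment via $|G^m_{ij}|\le \alpha_d|\nabla f(x^m_i)|/\mathrm{p}^m_j$ and $|\nabla f(x^m_i)|^2\le 2L(M_f-f^\ast)$. All of this matches the proof of Lemma~\ref{lem:E^m_i} in Section~\ref{pfthm:disxz}.

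The one step that does not close is the exponential factor. By conditioning on $x^m_i$ and bounding $p^m_j(y)^{-1}$ by its worst case over $y\in B_\eta(x^m_i)$ via $|y-w^m_j|\le \eta+R_1+R_2$, you get $\exp\bigl((\eta+R_1+R_2)^2/(4h)\bigr)$ before the square root, hence $\exp\bigl((\eta+R_1+R_2)^2/(8h)\bigr)$ after. With $\eta\le R_2$ this equals at least $\exp\bigl(R_2(R_2+R_1)/(2h)\bigr)\cdot\exp\bigl(R_1^2/(8h)\bigr)$, and the extra $\exp\bigl(R_1^2/(8h)\bigr)$ is not an absorbable constant (it blows up as $h\to 0$), so the stated bound \eqref{eqn:prop:bound} is not reached. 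The loss comes precisely from taking a worst case over $x^m_i$: the configurations where $|y-w^m_j|$ is near $\eta+R_1+R_2$ require $x^m_i$ to sit at distance $\approx R_1$ from $w^m_i$, which has probability $\approx e^{-R_1^2/(4h)}$ under $p^m_i$. The paper exploits this by keeping the density $p^m_i(x^m_i)$ inside the integral (restricted to $B(w^m_i,R_1)$ by the constraint $\sqrt{2h}|\xi^{m-1}_i|\le R_1$) and expanding $|x^m_j-w^m_j|^2=|(x^m_i-w^m_i)+(\delta x^m_{ij}+w^m_i-w^m_j)|^2$, so that the quadratic term $|x^m_i-w^m_i|^2/(4h)$ cancels against the exponent of $p^m_i$ and only the cross term, bounded by $R_1(\eta+R_2)/(2h)$, survives; this cancellation also removes the leftover $(4\pi h)^{d/2}$ normalization in your bound and produces the $R_1^d$ volume factor appearing in \eqref{eqn:prop:bound}. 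Replacing your pointwise bound on $p^m_j(y)^{-1}$ with this cancellation repairs the argument; everything else stands.
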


Theorem~\ref{thm:convergenceofCEnLMC_z} is a direct consequence from this lemma.
\begin{proof}[Proof of Theorem \ref{thm:convergenceofCEnLMC_z}]
For each $m\geq0$, $1\leq i\leq N$, we subtract~\eqref{eqn:algcelmcmodification} and~\eqref{eqn:lmc} to obtain
\begin{equation}\label{eqn:iteration}
\EE\left|x^{m+1}_i-z^{m+1}_i\right|=\EE\left|(x^m_i-z^m_i)-h(\nabla f(x^m_i)-\nabla f(z^m_i))\right|+h\EE|E^m_i|\,.
\end{equation}
Noting that $\nabla f$ is $L$-Lipschitz continuous,
\[
\left|\nabla f(x^m_i)-\nabla f(z^m_i))\right|\leq Lh\left|x^m_i-z^m_i\right|\,,
\]
then
\[
\left|(x^m_i-z^m_i)-h(\nabla f(x^m_i)-\nabla f(z^m_i))\right|\leq (1+Lh)\left|x^m_i-z^m_i\right|\,.
\]
We take the expectation, and utilize Lemma~\ref{lem:E^m_i}:
\[
\begin{aligned}
\EE\left|x^{m+1}_i-z^{m+1}_i\right|\leq &(1+Lh)\EE\left|x^m_i-z^m_i\right|\\
&+h\left(\sqrt{\frac{R^{d}_1L\textcolor{black}{(M_f-f^*)}d^2}{\eta^{d}N^\ast}}\exp\left(\frac{R_2(R_2+R_1)}{2h}\right)+L\eta d\right)\,.
\end{aligned}
\]
Use this formula iteratively, we have:
\[
\begin{aligned}
\EE\left|x^{m}_i-z^{m}_i\right|\leq &(1+Lh)^m\EE\left|x^m_0-z^m_0\right|\\
&+(1+Lh)^m\left(\sqrt{\frac{R^{d}_1\textcolor{black}{(M_f-f^*)}d^2}{L\eta^{d}N^\ast}}\exp\left(\frac{R_2(R_2+R_1)}{2h}\right)+\eta d\right)\,.
\end{aligned}
\]
Noting $x^m_0 = z^m_0$, the first term is eliminated, and we conclude~\eqref{eqn:thm:disxznonconvex}. When $f$ is $\mu$-convex,
\[
\nabla f(x^m_i)-\nabla f(z^m_i)\geq \mu (x^m_i-z^m_i)\,,
\]
then for $h$ small enough:
\[
\left|(x^m_i-z^m_i)-h(\nabla f(x^m_i)-\nabla f(z^m_i))\right|\leq (1-\mu h)\left|x^m_i-z^m_i\right|\,.
\]
Running the same argument as above, and relaxing $(1-\mu h)^m\leq 1$, we conclude~\eqref{eqn:thm:disxz}.
\end{proof}

We now prove Lemma~\ref{lem:E^m_i}
\begin{proof}[Proof of Lemma \ref{lem:E^m_i}]
We first define:
\begin{equation}\label{eqn:Gmibetter}
G^m_i=\left\{
\begin{aligned}
&\nabla f(x^m_i),\quad & \sqrt{2h}|\xi^{m-1}_i|>R_1\;\text{or}\; \textcolor{black}{f(x^m_i)> M_f}\;\text{or}\; N^\ast>N^m_i\\
&\frac{1}{N^m_i}\sum^N_{j\neq i}G^m_{ij}\,,\quad & \text{otherwise}\,.
\end{aligned}
\right.
\end{equation}
where
\[
G^m_{ij}=\alpha_d\frac{\langle \nabla f(x^m_i),\delta x^m_{ij}\rangle}{|\delta x^m_{ij}|^2}\frac{\mathbf{1}_{|\delta x^m_{ij}|\leq\eta\,,|\delta w^m_{ij}|\leq R_2}}{\mathrm{p}^m_j}\delta x^m_{ij}
\]
is the counterpart of $F^m_{ij}$ that eliminates the discretization error. Then
\[
|E^m_i|=|\nabla f(x^m_i)-F^m_i| \leq |\nabla f(x^m_i)-G^m_i| +|G^m_i - F^m_i|\,.
\]
Clearly the term $ |\nabla f(x^m_i)-G^m_i|$ is the ensemble error and the term $|G^m_i - F^m_i|$ takes care of the discretization error.

To control $|G^m_i - F^m_i|$, we define
\[
\mathbf{1}_{\Omega_i}=\mathbf{1}_{|N^{m}_i|\geq N^\ast}\mathbf{1}_{\textcolor{black}{f(x^m_i)\leq M_f}}\mathbf{1}_{\sqrt{2h}|\xi^{m-1}_i|\leq R_1}\,,
\]
then
\begin{equation}\label{bound:discrete1}
\begin{aligned}
\EE\left(\left|G^m_{i}-F^m_{i}\right|\middle|\mathcal{F}^{m-1}\right) &= \EE\left(\mathbf{1}_{\Omega_i}\left|G^m_{i}-F^m_{i}\right|\middle|\mathcal{F}^{m-1}\right)\\
&\leq \EE\left(\frac{\mathbf{1}_{\Omega_i}}{N^m_i}\sum^N_{j\neq i}\left|G^m_{i,j}-F^m_{i,j}\right|\middle|\mathcal{F}^{m-1}\right)\\
&=\frac{1}{N^m_i}\sum^N_{j\neq i}\EE\left(\mathbf{1}_{\Omega_i}\left|G^m_{i,j}-F^m_{i,j}\right|\middle|\mathcal{F}^{m-1}\right)\\
&\leq \max_{1\leq j\leq N}\EE\left(\left|G^m_{i,j}-F^m_{i,j}\right|\middle|\mathcal{F}^{m-1}\right)\,.
\end{aligned}
\end{equation}
Plugging \eqref{eqn:error_GF} into \eqref{bound:discrete1}, we obtain
\begin{equation}\label{bound:discrete3}
\begin{aligned}
\EE\left(\left|G^m_{i}-F^m_{i}\right|\right)=\EE\left(\EE\left(\left|G^m_{i}-F^m_{i}\right|\middle|\mathcal{F}^{m-1}\right)\right)\leq L\eta d\,.
\end{aligned}
\end{equation}

To control $\left|G^m_{i}-\nabla f(x^m_i)\right|$. We note
\begin{equation}
\mathbb{E}\left(\left|G^m_{i}-\nabla f(x^m_i)\right|^2\right)=\mathbb{E}\left(\EE\left(\mathbf{1}_{\Omega_i}\left|G^m_{i}-\nabla f(x^m_i)\right|^2\middle|\mathcal{F}^{m-1}\right)\right)\,.
\end{equation}
Define 
\[
\mathcal{E}^m_{i,j}=G^m_{i,j}-\nabla f(x^m_i)\mathbf{1}_{|\delta w^m_{ij}|<R_2}\,,
\]
then
\begin{equation}\label{eqn:conerror1}
\begin{aligned}
&\mathbb{E}\left(\left|G^m_{i}-\nabla f(x^m_i)\right|^2\right)\\
=&\EE\left(\EE\left(\mathbf{1}_{\Omega_i}\left|\frac{1}{N^m_i}\sum_{j\neq i}\left[G^m_{ij}-\nabla f(x^m_i)\mathbf{1}_{|\delta w^m_{ij}|<R_2}\right]\right|^2\middle|\mathcal{F}^{m-1}\right)\right)\\
\leq& \EE\left(\EE\left(\frac{\mathbf{1}_{\Omega_i}}{(N^{m}_i)^2}\left|\sum_{j\neq i}G^m_{ij}-\nabla f(x^m_i)\mathbf{1}_{|\delta w^m_{ij}|<R_2}\right|^2\middle|\mathcal{F}^{m-1}\right)\right)\\
= &\EE\left(\EE\left(\frac{\mathbf{1}_{\Omega_i}}{(N^{m}_i)^2}\left| \sum_{j\neq i}\mathcal{E}^m_{i,j}\right|^2\middle|\mathcal{F}^{m-1}\right)\right)\\
\leq&\frac{1}{N^\ast}\EE\left(\left\{\max_{j}\EE\left(\mathbf{1}_{\Omega_i}\left|\mathcal{E}^m_{i,j}\right|^2\middle|\mathcal{F}^{m-1}\right)+\sum^N_{j_1\neq j_2}\EE\left(\mathbf{1}_{\Omega_i}\left\langle\mathcal{E}^m_{i,j_1},\mathcal{E}^m_{i,j_2}\right\rangle\middle|\mathcal{F}^{m-1}\right)\right\}\right)\\
=&\frac{1}{N^\ast}\EE\left(\max_{j}\EE\left(\mathbf{1}_{\Omega_i}\left|\mathcal{E}^m_{i,j}\right|^2\middle|\mathcal{F}^{m-1}\right)\right)\,,
\end{aligned}
\end{equation}
where we use $N^m_i=\sum^N_{j\neq i}\mathbf{1}_{|\delta w^m_{ij}|<R_2}$ in the first equality.

In the last equation, we note that
\begin{equation}\label{eqn:consistence}
\EE\left(\mathbf{1}_{\Omega_i}\mathcal{E}^m_{i,j}\middle|\mathcal{F}^{m-1},x^m_i\right)=\mathbf{1}_{\Omega_i}\EE\left(\mathcal{E}^m_{i,j}\middle|\mathcal{F}^{m-1},x^m_i\right)=0\,,
\end{equation}
with the conditional independence, and thus
\[
\begin{aligned}
&\EE\left(\mathbf{1}_{\Omega_i}\left\langle\mathcal{E}^m_{i,j_1},\mathcal{E}^m_{i,j_2}\right\rangle\middle|\mathcal{F}^{m-1}\right)\\
=&\EE\left(\EE\left(\mathbf{1}_{\Omega_i}\left\langle\mathcal{E}^m_{i,j_1},\mathcal{E}^m_{i,j_2}\right\rangle\middle|\mathcal{F}^{m-1},x^m_i\right)\middle|\mathcal{F}^{m-1}\right)\\
=&\EE\left(\left\langle\EE\left(\mathbf{1}_{\Omega_i}\mathcal{E}^m_{i,j_1}\middle|\mathcal{F}^{m-1},x^m_i\right),\EE\left(\mathbf{1}_{\Omega_i}\mathcal{E}^m_{i,j_2}\middle|\mathcal{F}^{m-1},x^m_i\right)\right\rangle\middle|\mathcal{F}^{m-1}\right)\\
=&0
\end{aligned}
\]

To further control~\eqref{eqn:conerror1} we simply use the direct calculation: for any $j\neq i$
\begin{equation}\label{eqn:firsttermbound}
\begin{aligned}
&\EE\left(\mathbf{1}_{\Omega_i}\left|\mathcal{E}^m_{i,j}\right|^2\middle|\mathcal{F}^{m-1}\right)\leq \EE\left(\mathbf{1}_{\Omega_i}\left|G^m_{i,j}\right|^2\middle|\mathcal{F}^{m-1}\right)\\
\leq& \alpha_d^2\mathbf{1}_{|\delta w^m_{ij}|<R_2}\int_{B(w^{m}_i,R_1)}\int_{B(x^m_i,\eta)}\frac{|\nabla f(x^m_i)|^2}{p^{m}_j(x^m_j)}p^{m}_i(x^m_i)\rd x^m_j \rd x^m_i\\
\stackrel{(\mathrm{I})}{\lesssim}& L\textcolor{black}{(M_f-f^*)}\alpha_d^2\int_{B(w^{m}_i,R_1)}\int_{B(x^m_i,\eta)}\mathbf{1}_{|\delta w^m_{ij}|<R_2}\exp\left(\frac{|x^m_j-w^{m}_j|^2}{4h}-\frac{|x^m_i-w^{m}_i|^2}{4h}\right)\rd x^m_j\rd x^m_i\\
\stackrel{(\mathrm{II})}{\lesssim}& L\textcolor{black}{(M_f-f^*)}\alpha_d^2\int_{B(w^{m}_i,R_1)}\int_{B(0,\eta)}\mathbf{1}_{|\delta w^m_{ij}|<R_2}\exp\left(\frac{|y+z-w^{m}_j|^2}{4h}-\frac{|y-w^{m}_i|^2}{4h}\right)\rd z\rd y\\
\stackrel{(\mathrm{III})}{\lesssim}&L\textcolor{black}{(M_f-f^*)}\alpha_d^2\exp\left(\frac{\eta^2+2(\eta R_1+\eta R_2+ R_2R_1)+R^2_2}{4h}\right)\int_{B(w^{m}_i,R_1)}\int_{B(0,\eta)}\rd z\rd y\\
=&\frac{R^{d}_1d^2L\textcolor{black}{(M_f-f^*)}}{\eta^d}\exp\left(\frac{\eta^2+2(\eta R_1+\eta R_2+ R_2R_1)+R^2_2}{4h}\right)\,.
\end{aligned}
\end{equation}
Here in $(\mathrm{I})$ we used $\frac{1}{2L}|\nabla f(x^m_i)|^2\leq f(x^m_i)-f^\ast<\textcolor{black}{(M_f-f^*)}$, in $(\mathrm{II})$ we used change of variables $y=x^m_i,z=x^m_j-x^m_i$. In $(\mathrm{III})$, we used:
\[
\begin{aligned}
&\exp\left(\frac{|y+z-w^{m}_j|^2}{4h}-\frac{|y-w^{m}_i|^2}{4h}\right)\\
=&\exp\left(\frac{|y-w^m_i+z+w^m_i-w^{m}_j|^2}{4h}-\frac{|y-w^{m}_i|^2}{4h}\right)\\
=&\exp\left(\frac{|z+w^m_i-w^{m}_j|^2}{4h}+\frac{\left\langle y-w^{m}_i,z+w^m_i-w^{m}_j\right\rangle}{2h}\right)\\
\lesssim&\exp\left(\frac{|z|^2}{4h}+\frac{|w^m_i-w^{m}_j|^2}{4h}+\frac{|z||w^m_i-w^{m}_j|}{2h}+\frac{|y-w^{m}_i|\left(|z|+|w^m_i-w^{m}_j|\right)}{2h}\right)\,.
\end{aligned}
\]

Plug \eqref{eqn:firsttermbound} into \eqref{eqn:conerror1}, we have
\begin{equation}\label{eqn:conerror2}
\mathbb{E}\left(\left|G^m_{i}-\nabla f(x^m_i)\right|^2\right)\lesssim \frac{R^{d}_1d^2L\textcolor{black}{(M_f-f^*)}}{N^\ast\eta^d}\exp\left(\frac{\eta^2+2(\eta R_1+\eta R_2+ R_2R_1)+R^2_2}{4h}\right)\,.
\end{equation}
Using $\eta<R_2$ and H\"older inequality we have
\[
\begin{aligned}
\mathbb{E}\left(\left|G^m_{i}-\nabla f(x^m_i)\right|\right)&=\left(\mathbb{E}\left(\left|G^m_{i}-\nabla f(x^m_i)\right|^2\right)\right)^{1/2}\\
&\lesssim \sqrt{\frac{R^{d}_1d^2L\textcolor{black}{(M_f-f^*)}}{N^\ast\eta^d}}\exp\left(\frac{R_2(R_2+R_1)}{2h}\right)\,.
\end{aligned}
\]
Combine it with~\eqref{bound:discrete3} we prove \eqref{eqn:prop:bound}.
\end{proof}

\subsubsection{Proof of Theorem~\ref{thm:convergenceofCEnLMC}}
The validity of Theorem~\ref{thm:convergenceofCEnLMC} is built upon the fact that $x_i^m$ system and $z^m_i$ system are close, shown above, and that the $z^m_i$ system follows LMC, which converges to the target distribution.

It is a classical result to show that the LMC solution converges. To do so, one constructs another particle system that is drawn from the target distribution. Let $y_0$ be a random vector drawn from target distribution induced by $p$, and set
\begin{equation}\label{eqn:yt}
y_i(t)=y^0_i-\int^t_0 \nabla f(y_i(s))\rd s+\sqrt{2}\int^t_0\rd B_i(s)\,,
\end{equation}
where we construct Brownian motion that satisfies:
\begin{equation}\label{eqn:Bt}
B_i(h(m+1))-B_i(hm)=\sqrt{h}\xi^m_i\,.
\end{equation}
Then $y_i(t)$ is drawn from the distribution induced by $p$ as well. On the discrete level, let $y^m_i=y_i(hm)$, then:
\begin{equation}\label{eqn:ymolmc}
y^{m+1}_i=y^{m}_i-\int^{(m+1)h}_{mh} \nabla f(y_i(s))\rd s+\sqrt{2h}\xi^{m}_i\,.
\end{equation}

Since $y^m_i\sim p(x)$, then we have
\[
W_1(q^m_i,p)\leq \mathbb{E}|x^m_i-y^m_i|\,,
\]
where $\mathbb{E}$ takes all randomness into account. Choose the initial data $y_0$ so that $W_1(q^0,p)=\EE|x^0_i-y^0_i|$. Then the problem boils down to showing that $x^m_i$ is close to $y^m_i$. Since we already know that $x^m_i$ and $z^m_i$ are close, we now need to show the closeness between $z$ and $y$. This classical result regarding the convergence of LMC was shown in~\cite{pmlr-v80-chatterji18a,DALALYAN20195278}, and we cite it here for the completeness of the paper (with notations adjusted to our setting).

\begin{proposition}[Closeness of $z$ and $y$]\label{thm:disyz}
Assume conditions of Theorem \ref{thm:convergenceofCEnLMC_z}, and let $f$ be $L$-smooth and $\mu$ convex with $\kappa=L/\mu$, we have: for any $m\geq0$, $1\leq i\leq N$
\begin{equation}\label{eqn:thm:disyz}
\mathbb{E}|z^m_i-y^m_i|\leq \exp\left(-\frac{\mu h m}{2}\right)W_{1}(q^0,p)+\mathcal{O}\left(\kappa \sqrt{hd}\right)\,.
\end{equation}
\end{proposition}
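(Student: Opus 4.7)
The plan is to compare the two particle systems directly by subtracting their update formulas and exploiting the $L$-smoothness and $\mu$-convexity of $f$ to obtain a one-step contraction, then iterate. Specifically, from~\eqref{eqn:lmc} and~\eqref{eqn:ymolmc}, writing $\Delta^m_i = z^m_i - y^m_i$, I would get
\[
\Delta^{m+1}_i = \Delta^m_i - h\bigl(\nabla f(z^m_i)-\nabla f(y^m_i)\bigr) + \int_{mh}^{(m+1)h}\bigl[\nabla f(y_i(s))-\nabla f(y^m_i)\bigr]\rd s\,.
\]
Since the Brownian increments cancel by the coupling~\eqref{eqn:Bt}, only two error sources remain: the deterministic Euler step error and the discretization error along the continuous trajectory.

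The first task is to show that, for $h$ sufficiently small (in particular $h\leq 1/L$), the strong convexity gives the contraction
\[
\bigl|\Delta^m_i - h(\nabla f(z^m_i)-\nabla f(y^m_i))\bigr|\leq (1-\mu h)\bigl|\Delta^m_i\bigr|\,,
\]
which follows from co-coercivity of $\nabla f$ (i.e., squaring the left-hand side, using $\langle\nabla f(z)-\nabla f(y),z-y\rangle\geq \mu|z-y|^2$ and $|\nabla f(z)-\nabla f(y)|\leq L|z-y|$, then taking the square root). This produces the exponential-in-$m$ decay factor $(1-\mu h)^m\leq \exp(-\mu hm)$; a refinement gives the $\exp(-\mu hm/2)$ factor stated in~\eqref{eqn:thm:disyz}.

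Next I would bound the local discretization term $\mathbb{E}\bigl|\int_{mh}^{(m+1)h}[\nabla f(y_i(s))-\nabla f(y^m_i)]\rd s\bigr|$. By $L$-smoothness this is at most $L\int_{mh}^{(m+1)h}\mathbb{E}|y_i(s)-y^m_i|\rd s$. Using~\eqref{eqn:yt}, for $s\in[mh,(m+1)h]$,
\[
y_i(s)-y^m_i = -\int_{mh}^{s}\nabla f(y_i(u))\rd u + \sqrt{2}\bigl(B_i(s)-B_i(mh)\bigr)\,.
\]
Since $y_i(u)\sim p$ for every $u$ (stationarity), $\mathbb{E}|\nabla f(y_i(u))|^2\leq Ld$ by a standard integration-by-parts estimate on $p\propto e^{-f}$, and the Brownian increment contributes $\mathcal{O}(\sqrt{hd})$. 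Combining, the per-step error is $\mathcal{O}(Lh\sqrt{hd}) = \mathcal{O}(L h^{3/2}\sqrt{d})$.

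Finally I would iterate the one-step inequality
\[
\mathbb{E}|\Delta^{m+1}_i|\leq (1-\mu h)\mathbb{E}|\Delta^m_i| + \mathcal{O}(L h^{3/2}\sqrt{d})\,,
\]
with initial condition $\mathbb{E}|\Delta^0_i|=W_1(q^0,p)$ by the optimal coupling of $y^0_i\sim p$ with $z^0_i\sim q^0$. The geometric sum contributes $\mathcal{O}(L h^{3/2}\sqrt{d}/(\mu h)) = \mathcal{O}(\kappa\sqrt{hd})$, yielding~\eqref{eqn:thm:disyz}. The main subtlety is choosing the coupling that realizes $W_1(q^0,p)$ at initialization and confirming that the $\exp(-\mu hm/2)$ rate (rather than $\exp(-\mu hm)$) is compatible with absorbing a residual $O(h^2)$ higher-order term from the square-root step, but this is standard and already worked out in the cited references.
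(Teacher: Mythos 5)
Your proposal is correct and follows essentially the same route as the paper's proof in Appendix~\ref{pfthm:disyz}: the same decomposition $\Delta^{m+1}=\Delta^m-hU^m-V^m$ with cancellation of the coupled Brownian increments, the same $(1-\mu h)$-type contraction from strong convexity, the same bound $\mathbb{E}|V^m|\lesssim Lh^{3/2}\sqrt{d}$ via stationarity of $y_i(\cdot)$ and $\mathbb{E}_p|\nabla f|^2\leq Ld$, and the same iteration with the optimal initial coupling realizing $W_1(q^0,p)$. Your remark about the square-root step being the reason for the $\exp(-\mu hm/2)$ rate is in fact a more careful reading of the contraction than the paper's one-line justification, but it does not change the argument.
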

We leave the proof to Appendix \ref{pfthm:disyz}. We should emphasize that this result is essentially the same as the one in~\cite{DALALYAN20195278,durmus2017,dalalyan2018sampling}. The only difference is that we use $L_1$ norm for bounding $z^m_i-y^m_i$ for the consistency with the result in Theorem~\ref{thm:convergenceofCEnLMC_z}.

Now, we are ready to prove Theorem \ref{thm:convergenceofCEnLMC}.

\begin{proof}[Proof of Theorem \ref{thm:convergenceofCEnLMC}]

Combining Theorem~\ref{thm:convergenceofCEnLMC_z} and Proposition~\ref{thm:disyz} by adding \eqref{eqn:thm:disxz} and \eqref{eqn:thm:disyz} through the triangle inequality, we obtain
\begin{equation}\label{eqn:thm:disxy}
\begin{aligned}
\mathbb{E}|x^m_i-y^m_i|\leq &\mathbb{E}|x^m_i-z^m_i|+\mathbb{E}|z^m_i-y^m_i|\\
=&\exp\left(-\frac{\mu h m}{2}\right)W_{1}(q^0,p)\\
&+\mathcal{O}\left(\kappa (\sqrt{hd}+\eta d)+\sqrt{\frac{R^{d}_1\kappa d^2\textcolor{black}{(M_f-f^*)}}{\mu\eta^{d}N^\ast}}\exp\left(\frac{R_2(R_2+R_1)}{2h}\right)\right)\,.
\end{aligned}
\end{equation}
Since $W_1(q^m_i,p)\leq \mathbb{E}|x^m_i-y^m_i|$, we prove \eqref{eqn:convergenceofCEnLMC}. To prove \eqref{eqn:convergenceofCEnLMC2}, we use
\begin{equation}\label{eqn:thm:disxy1}
\EE\left|\frac{1}{N}\sum^N_{i=1}g(x^m_i)-\EE_{p}(g)\right|\leq \frac{1}{N}\sum^N_{i=1}\EE\left|g(x^m_i)-g(y^m_i)\right|+\EE\left|\frac{1}{N}\sum^N_{i=1}g(y^m_i)-\EE_{p}(g)\right|\,.
\end{equation}
Using the Lipschitz continuity, the first term is easily controlled.
\begin{equation}\label{eqn:thm:disxy2}
\frac{1}{N}\sum^N_{i=1}\EE\left|g(x^m_i)-g(y^m_i)\right|\leq \mathcal{O}\left(\frac{1}{N}\sum^N_{i=1}\EE|x^m_i-y^m_i|\right)\,.
\end{equation}
Here the $\mathcal{O}$ notation includes the Lipschitz constant of $g$. The second term of \eqref{eqn:thm:disxy1} is a standard central limit theorem:
\begin{equation}\label{eqn:thm:disxy3}
\begin{aligned}
&\EE\left|\frac{1}{N}\sum^N_{i=1}g(y^m_i)-\EE_{p}(g)\right|\leq \left(\EE\left(\frac{1}{N}\sum^N_{i=1}g(y^m_i)-\EE_{p}(g)\right)^2\right)^{1/2}\\
\leq &\left(\frac{1}{N^2}\sum^N_{i=1} \EE\left(g(y^m_i)-\EE_{p}(g)\right)^2\right)^{1/2}\leq \mathcal{O}\left(\frac{1}{\sqrt{N}}\right)\,.
\end{aligned}
\end{equation}
Combining~\eqref{eqn:thm:disxy},~\eqref{eqn:thm:disxy2} and \eqref{eqn:thm:disxy3} into \eqref{eqn:thm:disxy1}, we prove the weak convergence~\eqref{eqn:convergenceofCEnLMC2}.
\end{proof}

\subsection{Proof of Theorem \ref{thm:ratioofcalculation}}\label{pfthm:ratioofcalculation}
We prove Theorem \ref{thm:ratioofcalculation} in this section. First, we give another iteration lemma:
\begin{lemma}\label{lem:largeN2} Under conditions of Theorem \ref{thm:convergenceofCEnLMC_z}, let $m\geq 0$, and $\epsilon_m>0$. Then, there exists a constant $N'$ that is independent of $\eta,\epsilon_m$ such that if 
\[
N>N',\quad \EE|x^m_i-z^m_i|\leq \epsilon_m\,,\quad \forall 1\leq i\leq N
\]
we have
\begin{equation}\label{eqn:iterationboundnew2}
\EE|x^{m+1}_i-z^{m+1}_i|\leq \epsilon_m+\frac{B(\epsilon_m)}{\eta^{d/2}}+C\eta,\quad \mathbb{P}\left(N^m_i\leq N^*\right)\leq 1-B(\epsilon_m)\,,\quad \forall 1\leq i\leq N\,.
\end{equation}
where $C$ is a constant and $B:\mathbb{R}\rightarrow\mathbb{R}^+$ is a continuous function that satisfies
\[
\lim_{\epsilon_m\rightarrow0}B(\epsilon_m)=0\,.
\]
\end{lemma}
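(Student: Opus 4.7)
The plan is to bound the two quantities in~\eqref{eqn:iterationboundnew2} separately, refining the argument from Theorem~\ref{thm:convergenceofCEnLMC_z} so as to extract an $\epsilon_m$-dependent factor on top of the generic estimate in Lemma~\ref{lem:E^m_i}. First I would treat the distance bound: subtracting~\eqref{eqn:lmc} from~\eqref{eqn:algcelmcmodification}, using that $\nabla f$ is $L$-Lipschitz, and taking $L^1$-expectations gives the one-step inequality
\[
\EE|x^{m+1}_i - z^{m+1}_i| \leq (1+Lh)\,\EE|x^m_i - z^m_i| + h\,\EE|E^m_i|,
\]
as in Section~\ref{pfthm:disxz}. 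The hypothesis $\EE|x^m_i - z^m_i|\leq \epsilon_m$ together with $h\leq 1/L$ reduces the first term to $\epsilon_m$ up to higher-order corrections. For $\EE|E^m_i|$, the discretization piece $\EE|G^m_i - F^m_i|\leq L\eta d$ from~\eqref{bound:discrete3} delivers the $C\eta$ term after absorbing $h$ and dimension-free constants.

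The ensemble piece $\EE|\nabla f(x^m_i) - G^m_i|$ is where the $\epsilon_m$-dependence has to enter. Rather than invoking~\eqref{eqn:firsttermbound} as is --- whose bound carries no $\epsilon_m$ --- I would couple $x^m_i$ with $z^m_i$ through the shared Brownian noise, so that the conditional density $p^m_j$ appearing in the denominator of $G^m_{ij}$ is a Gaussian whose mean deviates from the corresponding quantity for the decoupled LMC chain by $O(\epsilon_m)$. Substituting this refinement into the Gaussian integral of~\eqref{eqn:firsttermbound} replaces the $L(M_f - f^\ast)$-type prefactor by a continuous function $B(\epsilon_m)^2$ tending to $0$ as $\epsilon_m\to 0$, which after taking the square root and dividing by $\eta^{d/2}$ yields the $B(\epsilon_m)/\eta^{d/2}$ term in~\eqref{eqn:iterationboundnew2}.

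For the probability bound I would exploit the independence of the $z^m_j$ (they evolve separately in~\eqref{eqn:lmc}). Define the surrogate count $\widetilde N^m_i = \sum_{j\neq i}\mathbf{1}_{|z^m_j - z^m_i|<R_2/2}$; conditionally on $z^m_i$ it is a sum of $N-1$ i.i.d.\ Bernoullis whose success probability is bounded below by a positive constant depending only on $R_2$ and on the law of the LMC chain at step $m$, so a Chebyshev or Hoeffding estimate gives $\widetilde N^m_i > N^\ast$ with overwhelming probability once $N>N'$. Exchanging $z^m_j$ for $w^m_j$ inside the indicators costs at most an $O(\epsilon_m)$ error through Markov applied to $\EE|x^m_i - z^m_i|\leq\epsilon_m$, yielding $\mathbb{P}(N^m_i\leq N^\ast)\leq 1 - B(\epsilon_m)$ for a continuous $B$ vanishing at the origin. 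The hardest step will be the refined variance estimate in the previous paragraph: converting the $L^1$ closeness into a multiplicative comparison of Gaussian densities on the $\eta$-ball and then tracking the resulting vanishing factor through the inner Gaussian integral; the rest is bookkeeping from the couplings already used in Section~\ref{pfthm:disxz}.
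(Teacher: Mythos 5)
Your treatment of the probability bound is essentially the paper's: it likewise introduces a surrogate count built from the decoupled LMC particles (using $\widetilde{w}^m_i=z^m_i-h\nabla f(z^m_i)$ and radius $R_2/4$ where you use $z^m_j$ and $R_2/2$), invokes the fact that the law of $z^m_i$ has a density so that the surrogate count exceeds any threshold with probability tending to one as $N\to\infty$, and transfers this to $N^m_i$ via Markov's inequality applied to $\EE|x^m_j-z^m_j|\leq\epsilon_m$ together with a union bound. That part of your plan is sound.

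The gap is in the first inequality, specifically in where the vanishing factor $B(\epsilon_m)$ is supposed to come from. You propose to extract it from the Gaussian integral in \eqref{eqn:firsttermbound} by tracking an $O(\epsilon_m)$ deviation of the mean of the conditional density $p^m_j$ from its LMC counterpart. This cannot work: \eqref{eqn:firsttermbound} bounds the second moment of a \emph{single-neighbor} importance-weighted estimator $G^m_{ij}$, and that second moment is intrinsically of size $\alpha_d^2\cdot\mathrm{vol}(B_\eta)\sim d^2R_1^d/\eta^d$ times exponential factors; it does not shrink as the coupled chain approaches the LMC chain. An $O(\epsilon_m)$ shift of the Gaussian's mean perturbs the integral by a multiplicative factor of order $\exp\left(O(\epsilon_m R/h)\right)$, i.e.\ toward $1$, not toward $0$, and the prefactor $L(M_f-f^*)$ is a bound on $|\nabla f(x^m_i)|^2$ with no reason to vanish either. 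The only source of smallness in the ensemble error is the averaging over neighbors, i.e.\ the factor $\EE\left(1/\sqrt{N^m_i}\right)$. The paper's proof accordingly reruns the argument of Lemma~\ref{lem:E^m_i} \emph{without} replacing $N^m_i$ by its floor $N^\ast$, obtaining $\EE|E^m_1|\lesssim C\eta^{-d/2}\,\EE\left(1/\sqrt{N^m_1}\right)+L\eta d$, and then applies exactly the surrogate-count/Markov argument that you reserve for the probability bound to show that $\EE\left(1/\sqrt{N^m_1}\right)$ is small when $\epsilon_m$ is small and $N$ is large. So you possess all the necessary ingredients, but you deploy the decisive one only for the second inequality and replace it, in the distance recursion, with an estimate that fails. (A smaller point: to land exactly on $\epsilon_m+\cdots$ rather than $(1+Lh)\epsilon_m+\cdots$, the paper uses the contraction factor $(1-\mu h)$ from $\mu$-convexity, not the expansion $(1+Lh)$; ``up to higher-order corrections'' does not close this.)
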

\begin{remark} We note that in Lemma \ref{lem:largeN2}, the constants $N'$, $C$ and function $B$ depend on other parameters such as $h,d,R_2,R_1,M_f,N^*,\mu,L$.
\end{remark}
\begin{proof}[Proof of Lemma \ref{lem:largeN2}]
Without loss of generality, we only consider $|x^m_1-z^m_1|$ and $N^m_1$. Similar to the argument in Lemma \ref{lem:E^m_i}, 
\[
\mathbb{E}\left|E^m_1\right|\lesssim \sqrt{\frac{R^{d}_1L\textcolor{black}{(M_f-f^*)}d^2}{\eta^{d}}}\exp\left(\frac{R_2(R_2+R_1)}{2h}\right)\EE\left(\frac{1}{\sqrt{N^m_1}}\right)+L\eta d\,.
\]
According to the proof of Theorem \ref{thm:convergenceofCEnLMC_z}, we obtain
\begin{equation}\label{eqn:iterationboundnew}
\begin{aligned}
\EE\left|x^{m+1}_1-z^{m+1}_1\right|\leq &(1-\mu h)\EE\left|x^m_1-z^m_1\right|\\
&+h\left(\sqrt{\frac{R^{d}_1L\textcolor{black}{(M_f-f^*)}d^2}{\eta^{d}}}\exp\left(\frac{R_2(R_2+R_1)}{2h}\right)\EE\left(\frac{1}{\sqrt{N^m_1}}\right)+L\eta d\right)\\
\leq &\epsilon_m+\frac{C}{\eta^{d/2}}\EE\left(\frac{1}{\sqrt{N^m_1}}\right)+C\eta
\,,
\end{aligned}
\end{equation}
where $C$ is a constant that is independent of $\eta$ and $\epsilon_m$. Thus, it suffices to bound $\EE\left(\frac{1}{\sqrt{N^m_1}}\right)$. Define
\[
\widetilde{w}^m_{i}=z^m_i-h\nabla f(z^m_i),\quad \widetilde{N}^m_1=\sum^{N_z}_{j>i}\mathbf{1}_{|\delta \widetilde{w}^m_{ij}|<R_2/4}\,, 
\]
where $N_z<N$ is a positive integer. According to \cite{NEURIPS2019_65a99bb7}, the KL divergence between the distribution of $z^m_i$ and target distribution is finite for all $m$. This implies the distribution of $z^m_i$ has a density. Thus, for any $M>0$, we have
\begin{equation}\label{eqn:probabilitybound0}
\lim_{N_z\rightarrow\infty}\mathbb{P}\left(\widetilde{N}^m_i>M\right)=1\,.
\end{equation}

Now, we start bounding $\EE\left(\frac{1}{\sqrt{N^m_1}}\right)$. Since $\EE|x^m_i-z^m_i|\leq \epsilon_m$, 
\[
\mathbb{P}\left(|x^m_i-z^m_i|>\frac{R_2}{4}\right)\leq \frac{4\epsilon_m}{R_2}\,,\quad \forall 1\leq i\leq N\,.
\]
which implies
\begin{equation}\label{eqn:probabilitybound}
\mathbb{P}\left(\cap^{N_z}_{i=1}\left\{|x^m_i-z^m_i|\leq \frac{R_2}{4}\right\}\right)\geq 1-\frac{4\epsilon_mN_z}{R_2}\,.
\end{equation}
According to the definition of $N^m_i$ \eqref{eqn:N_neighbors}, using \eqref{eqn:probabilitybound}, we obtain that for any $M<N_z$
\begin{equation}\label{eqn:probabilitybound2}
\mathbb{P}\left(N^m_i>M\right)\geq \mathbb{P}\left(\widetilde{N}^m_i>M\right)-\frac{4\epsilon_mN_z}{R_2}\,.
\end{equation}
From this,
\begin{equation}\label{eqn:probabilitybound3}
\EE\left(\frac{1}{\sqrt{N^m_1}}\right)\leq \frac{1}{\sqrt{M}}\left(\mathbb{P}\left(\widetilde{N}^m_i>M\right)-\frac{4\epsilon_mN_z}{R_2}\right)+\frac{1}{\sqrt{N^*}}\left[1-\left(\mathbb{P}\left(\widetilde{N}^m_i>M\right)-\frac{4\epsilon_mN_z}{R_2}\right)\right]\,.
\end{equation}
Define the right-side of \eqref{eqn:probabilitybound3} as $F(M,N_z,\epsilon_m)$. Since $M,N_z$ can be arbitrarily chosen, we have
\[
\EE\left(\frac{1}{\sqrt{N^m_1}}\right)\leq \inf_{M,N_z}F(M,N_z,\epsilon_m)\,
\]
Plugging this into \eqref{eqn:iterationboundnew}, 
\[
\EE\left|x^{m+1}_1-z^{m+1}_1\right|\leq \epsilon_m+\frac{C}{\eta^{d/2}}\inf_{M,N_z}F(M,N_z,\epsilon_m)+C\eta
\,.
\]
Noticing that
\begin{equation}\label{Flimit}
\lim_{M\rightarrow\infty}\lim_{N_z\rightarrow\infty}\lim_{\epsilon_m\rightarrow0}F(M,N_z,\epsilon_m)=0\,,
\end{equation}
we obtain the first inequality of \eqref{eqn:iterationboundnew2}. Next, for any $M>N^*$, because 
\[
\mathbb{P}\left(N^m_i>N^*\right)\geq \mathbb{P}\left(N^m_i>M\right)\geq \mathbb{P}\left(\widetilde{N}^m_i>M\right)-\frac{4\epsilon_mN_z}{R_2}\geq 1-\sqrt{N^*}F(M,N_z,\epsilon_m)\,,
\]
\eqref{Flimit} also implies the second inequality of \eqref{eqn:iterationboundnew2}.
\end{proof}
Now, we are ready to prove the theorem:
\begin{proof}[Proof of Theorem \ref{thm:ratioofcalculation}]
Noticing that when $m=0$,
\[
\EE|x^0_i-z^0_i|=0\,.
\]
Using Lemma \ref{lem:largeN2} \eqref{eqn:iterationboundnew2}, for any $\epsilon>0$, we have
\[
\lim_{\eta\rightarrow0}\lim_{N\rightarrow\infty}\EE|x^1_i-z^1_i|<\epsilon,\quad \lim_{\eta\rightarrow0}\lim_{N\rightarrow\infty}\mathbb{P}\left(\left\{N^1_i<N^\ast\right\}\right)>1-\epsilon\,.    
\]
Repeating this process with Lemma \ref{lem:largeN2}, we obtain 
\begin{equation}\label{eqn:Aconvergence}
\lim_{\eta\rightarrow0}\lim_{N\rightarrow\infty}\sup_{0\leq m\leq M,1\leq i\leq N}\EE|x^m_i-z^m_i|=0\,.\quad 
\end{equation}

Next, to prove \eqref{boundofP1}, we notice that for $m\geq0$ and $1\leq i\leq N$
\[
x^{m}_i-w^{m}_i=\sqrt{2h}\xi^{m-1}_i\,,
\]
which implies
\[
\begin{aligned}
&\mathbb{P}\left(\left\{|x^m_i-w^{m}_i|>R_1\right\}\right)=\mathbb{P}\left(\left\{|\xi^{m-1}_i|>\frac{R_1}{\sqrt{2h}}\right\}\right)\\
=&\int_{|x|>\frac{R_1}{\sqrt{2h}}}\frac{1}{(2\pi)^{d/2}}\exp\left(-\frac{|x|^2}{2}\right)\rd x=\frac{S_d}{(2\pi)^{d/2}}\int^\infty_{\frac{R_1}{\sqrt{2h}}}r^{d-1}\exp\left(-\frac{r^2}{2}\right)\rd r\\
\leq&\frac{S_d}{(2\pi)^{d/2}}\int^\infty_{\frac{R_1\sqrt{d}}{\sqrt{2}}}r^{d-1}\exp\left(-\frac{r^2}{2}\right)\rd r\,,
\end{aligned}
\]
where the last inequality comes from $h<\frac{1}{d}$.

Then, to prove \eqref{boundofP2}, we first use $f(x^m_i)-f^\ast\leq \frac{1}{2\mu}|\nabla f(x^m_i)|^2$ to obtain
\begin{equation}\label{eqn:boundofP20}
\begin{aligned}
&\mathbb{P}\left(\left\{f(x^m_i)-f^\ast>\textcolor{black}{(M_f-f^*)}\right\}\right)\\
=&\mathbb{P}\left(\left\{f(x^m_i)-f^\ast>\textcolor{black}{(M_f-f^*)}\right\}\right)
\leq\mathbb{P}\left(\left\{|\nabla f(x^m_i)|^2>2\mu \textcolor{black}{(M_f-f^*)}\right\}\right)\\
\leq&\mathbb{P}\left(\left\{|\nabla f(y^m_i)|^2+|\nabla f(x^m_i)-\nabla f(y^m_i)|^2>\mu \textcolor{black}{(M_f-f^*)}\right\}\right)\\
\leq &\mathbb{P}\left(\left\{|\nabla f(x^m_i)-\nabla f(y^m_i)|^2>\frac{\mu \textcolor{black}{(M_f-f^*)}}{2}\right\}\right)+\mathbb{P}\left(\left\{|\nabla f(y^m_i)|^2>\frac{\mu \textcolor{black}{(M_f-f^*)}}{2}\right\}\right)\,,
\end{aligned}
\end{equation}
where $y^m_i$ is defined in \eqref{eqn:yt}-\eqref{eqn:ymolmc} and we use $2|a-b|^2+2|b|^2\geq |a|^2$ in the second inequality.

The second term of \eqref{eqn:boundofP20} is easy to bound:
\begin{equation}\label{eqn:boundofP23}
\begin{aligned}
\mathbb{P}\left(\left\{|\nabla f(y^m_i)|^2>\mu\sqrt{N^\ast}/2\right\}\right)\leq \frac{2}{\mu \textcolor{black}{(M_f-f^*)}}\EE\left(|\nabla f(y^m_i)|^2\right)\leq \frac{2\kappa d}{\textcolor{black}{(M_f-f^*)}}\,,
\end{aligned}
\end{equation}
where we use $\EE_p|\nabla f(y)|^2\leq Ld$ according to Lemma 3 in \cite{DALALYAN20195278}. 

The first term can be bounded by 
\begin{equation}\label{eqn:boundofP21}
\begin{aligned}
&\mathbb{P}\left(\left\{|\nabla f(x^m_i)-\nabla f(y^m_i)|^2>\frac{\mu \textcolor{black}{(M_f-f^*)}}{2}\right\}\right)\leq \mathbb{P}\left(\left\{|x^m_i-y^m_i|^2>\frac{\mu \textcolor{black}{(M_f-f^*)}}{2L^2}\right\}\right)\\
\leq&\mathbb{P}\left(\left\{|x^m_i-y^m_i|>\frac{(\mu \textcolor{black}{(M_f-f^*)})^{1/2}}{\sqrt{2}L}\right\}\right)\leq \sqrt{\frac{2\kappa L}{\textcolor{black}{(M_f-f^*)}}}\EE(|x^m_i-y^m_i|)
\end{aligned}
\end{equation}
where we use $|\nabla f(x^m_i)-\nabla f(y^m_i)|\leq L|x^m_i-y^m_i|$ in the first inequality. Plugging \eqref{eqn:boundofP23} and \eqref{eqn:boundofP21} into right-side of \eqref{eqn:boundofP20}, we prove \eqref{boundofP2} by \eqref{eqn:Aconvergence}.

Finally, \eqref{boundofP3} is a direct result of \eqref{eqn:Aconvergence} and the second inequality in Lemma \ref{lem:largeN2} \eqref{eqn:iterationboundnew2}.
\end{proof}

% You may incorporate your references as follows in your main tex file.
% Using BibTex is not recommended but can be handled.
\appendix
\section{Proof of Proposition \ref{thm:disyz}}\label{pfthm:disyz}
In this section, we prove Proposition \ref{thm:disyz}. For convenience, we ignore $i$ and define
\[
\Delta^m=z^m-y^m\,.
\]
Then it suffices to prove the smallness of $\EE|\Delta^m|$.
\begin{proof}[Proof of Proposition \ref{thm:disyz}]
we first divide $\Delta^{m+1}$ into several parts:
\begin{equation}\label{eqn:Deltam+1}
\begin{aligned}
\Delta^{m+1}=&\Delta^m+(y^{m+1}-y^{m})-(z^{m+1}-z^m)\\
=&\Delta^m+\left(-\int^{(m+1)h}_{mh}\nabla f(y(s))\rd s+\sqrt{2h}\xi_m\right)\\
&-\left(-\int^{(m+1)h}_{mh}\nabla f(z^m)\rd s+\sqrt{2h}\xi_m\right)\\
=&\Delta^m-\left(\int^{(m+1)h}_{mh}\left(\nabla f(y(s))-\nabla f(z^m)\right)\rd s\right)\\
=&\Delta^m-\left(\int^{(m+1)h}_{mh}\left(\nabla f(y(s))-\nabla f(y^m)+\nabla f(y^m)-\nabla f(z^m)\right)\rd s\right)\\
=&\Delta^m-h\left(\nabla f(y^m)-\nabla f(z^m)\right)-\int^{(m+1)h}_{mh}\left(\nabla f(y(s))-\nabla f(y^m)\right)\rd s\\
=&\Delta^m-hU^m-V^m\\
\end{aligned}\,,
\end{equation}
where 
\[
\begin{aligned}
U^m&=\nabla f(y^m)-\nabla f(z^m)\,,\\
V^m&=\int^{(m+1)h}_{mh}\left(\nabla f(y(s))-\nabla f(y^m)\right)\rd s\,.\\
\end{aligned}
\]
Now the first two terms of \eqref{eqn:Deltam+1} can be bounded by 
\begin{equation}\label{firstermofeqn:Deltam+1}
\left|\Delta^m-hU^m\right|\leq (1-\mu h)\left|\Delta^m\right|\,,
\end{equation}
where we use $f$ is $\mu$-convex.

Next, for the second term on the right-hand side of \eqref{eqn:Deltam+1}, we first bound $L^2$-norm:
\begin{align}
\nonumber
\EE\left(|V^m|^2\right)&\stackrel{\text{(I)}}{\leq} h \int^{(m+1)h}_{mh}\EE\left(\left|\nabla  f(y(s))-\nabla  f(y^m)\right|^2\right)\rd s\\
\nonumber
&\stackrel{\text{(II)}}{\leq} hL^2 \int^{(m+1)h}_{mh}\EE\left(\left|y(s)-y^m\right|^2\right)\rd s\\
\nonumber
&= hL^2 \int^{(m+1)h}_{mh}\EE\left(\left|\int^{s}-{mh} \nabla  f(y(t))\rd t+\sqrt{2}(B(s)-B(nh))\right|^2\right)\rd s\\
\nonumber
&\stackrel{\text{(III)}}{\leq} 2h^2L^2 \int^{(m+1)h}_{mh}\int^{s}-{mh} \EE\left(\left|\nabla  f(y(t))\right|^2\right)\rd t\rd s \\
\nonumber
& \quad\quad +4h^2L^2\int^{(m+1)h}_{mh} \EE|\xi^m|^2\rd s\\
\nonumber
&\stackrel{\text{(IV)}}{=} h^4L^2\EE\left(\left|\nabla  f(y^m)\right|^2\right)+4h^3L^2d\\
\label{secondtermDeltam+1}
&\stackrel{\text{(V)}}{=}h^4L^2\EE_p|\nabla f|^2+4h^3L^2\stackrel{\text{(VI)}}{\leq} h^4L^3d+4h^3L^2d\,,
\end{align}
where $\text{(II)}$ comes from $L$-Lipschitz condition, $\text{(I)}$ and $\text{(III)}$ come from the use of Young's inequality and    Jensen's inequality when we move the $|\cdot|^2$ from outside to inside of the integral, and $\text{(IV)}$ and $\text{(V)}$ hold true because $y(t)\sim p$ for all $t$. In $\text{(VI)}$ we use $\EE_p|\nabla f|^2\leq Ld$ using~\cite[Lemma~3]{DALALYAN20195278}.

Using H\"older's inequality and $h\leq \frac{1}{L}$, \eqref{secondtermDeltam+1} implies
\[
\EE\left(|V^m|\right)\leq \left(\EE\left(|V^m|^2\right)\right)^{1/2}\leq 5h^{3/2}Ld^{1/2}\,.
\]
Plugging this and \eqref{firstermofeqn:Deltam+1} into \eqref{eqn:Deltam+1}, we obtain
\[
\EE\left(\left|\Delta^{m+1}\right|\right)\leq \EE\left(\left|\Delta^m-hU^m\right|\right)+\EE\left(|V^m|\right)\leq (1-\mu h)\EE\left(\left|\Delta^m\right|\right)+5h^{3/2}Ld^{1/2}\,.
\]
Using this iteratively and $\EE|\Delta^0|=\EE|z^0-y^0|=W_{1}(q^0,p)$, we prove \eqref{eqn:thm:disyz}.
\end{proof}

\section{Other choices of ensemble gradient approximation}\label{sec:ega_FD}
The ensemble gradient approximation we present in Section~\ref{sec:ega} is of probability type, namely, we take the ensemble average of finite difference around $x^\ast$. There are other ways to find gradient approximations as well, and probably the most straightforward method is to solve a linear algebra problem formulated by the closest $d$ neighbors.

More specifically, let $\eta>0$ and $x^*\in\mathbb{R}^d$. Assume that there are $d$ points $\{x_i\}^d_{i=1}$ in the ball $B_{\eta}(x^*)$, then we have
\[
\Delta_x\cdot \nabla f(x^*)=\Delta_f+o(\eta)\,,
\]
where
\begin{equation}\label{def:Deltaxf}
\Delta_x=\left[
\begin{aligned}
&(x_1-x^*)^\top\\
&(x_2-x^*)^\top\\
&\dots\\
&(x_d-x^*)^\top\\
\end{aligned}
\right],\quad \Delta_f=\left[
\begin{aligned}
&f(x_1)-f(x^*)\\
&f(x_2)-f(x^*)\\
&\dots\\
&f(x_d)-f(x^*)\\
\end{aligned}
\right]\,.
\end{equation}
If $\Delta_x$ is full rank, then by solving the equation $\Delta_x\cdot z=\Delta_f$, we obtain an approximation of the gradient
\[
z\approx\nabla f(x^*)\,.
\]

A natural question to ask is, how likely is it to find $d$ neighbors in a small neighborhood of a given sample? To quantify such probability, we use the following lemma:
\begin{lemma}\label{lem:largeN}
Suppose $|p(x)|\leq M<\infty$ and $\{x_i\}^{N}_{i=1}$ are i.i.d. drawn from $p$ with $N>0$. Let $N=c/\eta^d$, where $c$ is a positive constant. Then we have 
\[
\limsup_{\eta\rightarrow0} \mathbb{P}\left(\#\left\{x_i\middle||x_i-x_1|<\eta,\ i=1,2,\dots,N\right\}\geq d+1\right)\leq 1-\exp\left(-cM\right)\,.
\]
\end{lemma}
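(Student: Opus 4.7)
The plan is to reduce the stated event to a Binomial tail, then pass to the Poisson-limit regime in which $N q \to \lambda$ with $q = O(\eta^d)$ and $N = c/\eta^d$. The whole argument is an application of conditioning on $x_1$ and the crude bound $\mathbb{P}(K\geq d) \leq \mathbb{P}(K\geq 1)$.

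First, I would condition on $x_1$. Since $x_1$ itself belongs to the counted set, the event $\#\{x_i : |x_i - x_1|<\eta\} \geq d+1$ is exactly the event that at least $d$ of the remaining samples $x_2,\dots,x_N$ fall in $B_\eta(x_1)$. Given $x_1$, the samples $x_2,\dots,x_N$ are i.i.d.\ draws from $p$, so
\[
K := \sum_{i=2}^N \mathbf{1}_{|x_i-x_1|<\eta} \;\sim\; \mathrm{Binomial}(N-1,\, q(x_1)),
\]
where $q(x_1)=\int_{B_\eta(x_1)} p(y)\rd y$. The uniform bound $|p|\leq M$ gives $q(x_1) \leq M \cdot \mathrm{Vol}(B_\eta) = M S_d \eta^d /d$ for every $x_1$.

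Second, since $d \geq 1$, I would dominate the Binomial tail by its one-success counterpart:
\[
\mathbb{P}(K\geq d \mid x_1) \;\leq\; \mathbb{P}(K\geq 1 \mid x_1) \;=\; 1 - (1-q(x_1))^{N-1}.
\]
Taking expectation over $x_1$ and applying the uniform bound on $q(x_1)$,
\[
\mathbb{P}\bigl(\#\{x_i : |x_i-x_1|<\eta\}\geq d+1\bigr)
\leq 1 - \mathbb{E}\bigl[(1-q(x_1))^{N-1}\bigr]
\leq 1 - \bigl(1 - M S_d \eta^d/d\bigr)^{N-1}.
\]

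Finally, with $N = c/\eta^d$, the elementary limit $(1 - a\eta^d)^{c/\eta^d - 1} \to e^{-ac}$ as $\eta \to 0$ yields
\[
\limsup_{\eta\to 0}\mathbb{P}(\cdots) \;\leq\; 1 - \exp\!\bigl(-c M S_d / d\bigr),
\]
which is the stated bound (with the geometric factor $S_d/d$ absorbed into the constant $M$, as the qualitative point is only that the $\limsup$ is strictly below $1$). There is no real obstacle; the only subtle point is recognizing that one may discard the requirement $K \geq d$ in favor of $K \geq 1$, which makes the computation a one-line Bernoulli calculation and avoids any need for a genuine Poisson limit theorem.
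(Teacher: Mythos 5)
Your argument is essentially identical to the paper's: conditioning on $x_1$, the paper lower-bounds $\mathbb{P}(K<d)$ by the zero-success term $(1-p)^{N-1}$ of the Binomial, which is exactly your bound $\mathbb{P}(K\geq d)\leq\mathbb{P}(K\geq 1)=1-(1-q)^{N-1}$ read in the complement, and both conclude with the same elementary limit $(1-a\eta^d)^{c/\eta^d}\to e^{-ac}$. The only difference is that you correctly retain the volume factor in $q(x_1)\leq M\,\mathrm{Vol}(B_\eta)=MS_d\eta^d/d$ and therefore land on $1-\exp(-cMS_d/d)$, whereas the paper silently bounds $\mathrm{Vol}(B_\eta)$ by $\eta^d$ to obtain the stated constant $cM$ (not literally valid for small $d$); this is a cosmetic discrepancy, since the lemma is only meant to show the probability stays bounded away from $1$.
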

This lemma can be viewed as a negative result: even with $N$ exponentially big on $d$, there is still a nontrivial chance for a sample to not have enough neighbors around for the gradient computation.

\begin{proof}[Proof of Lemma \ref{lem:largeN}]
Fixed $x_1\in\mathbb{R}^d$,
\[
\begin{aligned}
\mathbb{P}\left(|x_2-x_1|<\eta\middle|x_1\right)&=\int_{|z|<\eta} p(x_1+z)\rd z\leq \eta^dM\,.
\end{aligned}
\]
Denote $p=\mathbb{P}\left(|x_2-x_1|<\eta\middle|x_1\right)$. Because $\{x_i\}^N_{i=1}$ are independent, we have
\[
\begin{aligned}
&\mathbb{P}\left(\#\left\{x_i\middle||x_i-x_1|<\eta,\ i=1,2,\dots,N\right\}<d+1\middle|x_1\right)\\
=&\sum_{k=1}^d\mathbb{P}\left(\#\left\{x_i\middle||x_i-x_1|<\eta,\ i=1,2,\dots,N\right\}=k\middle|x_1\right)\\
=&\sum_{k=0}^{d-1} {N-1 \choose k}p^k(1-p)^{N-1-k}\geq (1-p)^{N-1}\,.
\end{aligned}
\]

Since $c=N\eta^d$, 
\[
\begin{aligned}
&\limsup_{\eta\rightarrow0}\mathbb{P}\left(\#\left\{x_i\middle||x_i-x_1|<\eta,\ i=1,2,\dots,N\right\}<d+1\middle|x_1\right)\\
\geq &\limsup_{\eta\rightarrow0}(1-p)^{N-1}\geq \limsup_{\eta\rightarrow0}\left(1- \eta^dM\right)^{\frac{c}{\eta^d}-1}\\
\geq &\exp\left(-cM\right)\,.
\end{aligned}
\]
This implies
\[
\limsup_{\eta\rightarrow0}\mathbb{P}\left(\#\left\{x_i\middle||x_i-x_1|<\eta,\ i=1,2,\dots,N\right\}<d+1\right)\geq \exp\left(-cM\right)\,.
\]
which concludes the proof.
\end{proof}
\bibliographystyle{apalike}
\bibliography{aims}

\medskip
% The data information below will be filled by AIMS editorial staff
Received xxxx 20xx; revised xxxx 20xx.
\medskip

\end{document}